\DeclareMathOperator*{\argmin}{arg\,min}
\newcommand{\joint}{P(X,Y)}
\newcommand{\conditional}[1]{P(Y|#1)} 
\newcommand{\constraints}{\mathcal{G}}
\newcommand{\marginal}{P(Y)}
\newcommand{\loss}{\mathcal{L}}
\newcommand{\proxy}{\tilde{\loss}}
\newtheorem{theorem}{Theorem}[section]
\newtheorem{corollary}{Corollary}[theorem]
\newtheorem{lemma}[theorem]{Lemma}
\newtheorem{definition}{Definition}
\newtheorem{proposition}[theorem]{Proposition}
\title{The Perils of Learning Before Optimizing}
\author{
Chris Cameron\textsuperscript{\rm 1},
Jason Hartford\textsuperscript{\rm 2},
Taylor Lundy\textsuperscript{\rm 1},
Kevin Leyton-Brown\textsuperscript{\rm 1}}
\begin{document}
\maketitle

\begin{abstract}
Formulating real-world optimization problems often begins with making predictions from historical data (e.g., an optimizer that aims to recommend fast routes relies upon travel-time predictions). Typically, learning the prediction model used to generate the optimization problem and solving that problem are performed in two separate stages. Recent work has showed how such prediction models can be learned end-to-end by differentiating through the optimization task. Such methods often yield empirical improvements, which are typically attributed to end-to-end making better error tradeoffs than the standard loss function used in a two-stage solution. We refine this explanation and more precisely characterize when end-to-end can improve performance. When prediction targets are stochastic, a two-stage solution must make an a priori choice about which statistics of the target distribution to model---we consider expectations over prediction targets---while an end-to-end solution can make this choice adaptively. 
We show that the performance gap between a two-stage and end-to-end approach is closely related to the \emph{price of correlation} concept in stochastic optimization and show the implications of some existing POC results for the predict-then-optimize problem. We then consider a novel and particularly practical setting, where multiple prediction targets are combined to obtain each of the objective function’s coefficients. We give explicit constructions where (1) two-stage performs unboundedly worse than end-to-end; and (2) two-stage is optimal. 
We use simulations to experimentally quantify performance gaps and identify a wide range of real-world applications from the literature whose objective functions rely on multiple prediction targets, suggesting that end-to-end learning could yield significant improvements.
\end{abstract}

\section{Introduction}\label{sec:introduction}

It is increasingly common to face optimization problems whose inputs must be learned from data rather than being given directly. For example, consider a facility location application in which loss depends  on  predictions about both traffic congestion and demand. The most natural way to address such problems is to separate prediction from optimization into two separate stages
\citep{yan2012detecting, centola2018behavior, bahulkar2018community}. In the first stage, a model is trained to optimize some standard loss function (e.g., predict travel times to minimize mean squared error). In the second stage, the model predictions are used to parameterize an optimization problem, which is then solved (e.g., recommend facility locations given demand and travel time predictions).

Recently there has been a lot of interest in the ``end-to-end'' approach of training a predictive model to minimize loss on the downstream optimization task \citep{Donti2017,demirovic2019investigation,Wilder2019,Wilder2020}. 
Advances in deep learning have enabled any set of differentiable functions to be chained together, allowing for the end-to-end training of a differentiable prediction task coupled with a differentiable downstream task using back propagation. \citet{Amos2017} introduced the concept of optimization as a layer, showing how to analytically compute gradients through a QP solver. This sparked a number of follow up papers showing how to build differentiable layers for different classes of optimization problems (e.g., submodular optimization \citep{Djolonga2017}, linear programming \citep{Wilder2019}, general cone programs \citep{Agrawal2019}, and disciplined convex programs (\citeauthor{Agrawal2020}~\citeyear{Agrawal2020})).

Such ``end-to-end'' approaches have been shown to improve performance on a diverse set of applications (e.g., inventory stocking \citep{Amos2017}, bipartite matching \citep{Wilder2019}, and facility location \citep{Wilder2020}). The improvements are often attributed to the end-to-end approach having made better error trade-offs than the two-stage approach. For example, \citet{Donti2017} argue that since all models inevitably make errors, it is important to look at a final task-based objective; \citet{Wilder2019} suggest that end-to-end optimization is likely to be especially beneficial for difficult problems where the best model is imperfect (e.g., when either model capacity or data is limited); and \citet{elmachtoub2020smart} argue for the effectiveness of end-to-end when there is model misspecification and perfect prediction is unattainable. One might therefore assume that end-to-end learning offers no benefit in an ``error-free'' setting where we have access to the Bayes optimal predictor for a given loss function.

This paper shows otherwise: that even when there is no limit on model capacity or training set size, the two-stage approach can fail catastrophically in the common case where the prediction stage models expectations over prediction targets.
In contrast, end-to-end approaches can adaptively learn statistics of the joint distribution that matter most for the downstream task. 
This gives them  a clear advantage when the joint distribution is not well approximated by the product distribution of its marginal probabilities. We formalize this advantage by drawing a connection to the \emph{price of correlation} concept in stochastic optimization \citep{agrawal2012price}. The price of correlation compares the performance of an optimal solution of a stochastic program to the approximate solution that treats every input as independent. We show that in stochastic optimization settings where we can prove that end-to-end is optimal, we can leverage price-of-correlation bounds to obtain worst-case lower bounds on the gap between a two-stage and end-to-end approach. Unfortunately, we also show that end-to-end is not optimal for all stochastic optimization settings.

We then consider a novel setting in which predict-then-optimize is a common paradigm and end-to-end learning is particularly relevant: where multiple prediction targets are combined to obtain each of the objective function's coefficients. We show that this setting can give rise to potentially unbounded performance gaps between the end-to-end and the two-stage approaches.
The key properties we leverage to show a gap are nonlinearity in the objective function's dependence on the prediction targets and correlation structure across targets. All of our results consider a class of problems where end-to-end performs just as well as a stochastic optimization oracle. We make three main contributions: (1)~we establish a base case by showing that when the parameterization is linear, the two-stage approach is optimal; (2)~we construct a distribution where the gap is unbounded for a pairwise product parameterization function that occurs commonly in practice; and (3)~we show how to construct a distribution for any nonlinear parameterization such that there is a gap. 

These results highlight two main features of an optimization problem that together can lead to suboptimal two-stage solutions, and that practitioners should look out for in practice. The first is that multiple targets are learned independently, without accounting for correlations. This is likely to happen when the targets are conceptually different and tend to be learned separately, but may be correlated via latent factors that affect them both. The second is that targets are combined nonlinearly to compute coefficients of the objective function for downstream optimization. This is quite common in practice, notably where the cost of some decision is the product of two inputs (e.g., total cost = cost/unit $\cdot$ \#units). 

Through a running example of the demand-weighted facility location problem, we (1) provide intuition for how such correlations might arise in practice and (2) experimentally explore how correlation impacts the performance gap between a two-stage and end-to-end approach. We design a synthetic benchmark that controls the level of correlation between travel times and demand. We assume that a two-stage approach is able to optimally predict both demand and travel time, but show that it increasingly underperforms relative to the end-to-end approach as the correlation between demand and travel time increases.

Although---particularly given our way of framing the problem---the reader might feel that two-stage learning is obviously problematic in such domains, we illustrate that two-stage learning has been widely used in real-world optimization problems likely to satisfy these conditions. Unfortunately, raw data was unavailable for any of the published real-world application work we reviewed.
We thus support our argument by describing this work where the parameterization involves pairwise products of the targets \citep{mauttone2009route, zhen2016supply, alonso2017predictive,sert2020freight}, and identifying plausible correlation structures between targets that would induce practical gaps between two-stage and end-to-end performance. 

The remainder of this paper is organized as follows. In Section 2, we survey the literature. In Section 3, we give our results characterizing a gap between two-stage and end-to-end in an error-free setting. In Section 4, we perform a simulation study analyzing how correlation impacts the performance gap. In Section 5, we discuss examples from the literature where end-to-end learning could yield improvements. Finally, we summarize our contributions in Section 6.

\section{Related Work}\label{sec:background}
To fully exploit the power of gradient-based learning methods, the end-to-end learning philosophy advocates for jointly learning all the parameters between the raw input and the final outputs of the application. End-to-end models are typically learned with generic architectures that are 
extremely flexible and have been shown to be  effective  for a wide variety of applications.

However, leveraging knowledge about the underlying task can also provide a useful inductive bias so the appropriate structure does not need to be discovered from scratch. In the predict-then-optimize setting, we know that the downstream task involves solving a specific optimization problem. Instead of using a generic method to learn to predict the solution to the optimization problem, the end-to-end learning setup learns the target parameters and relies on existing solvers to find associated solutions. This approach can be optimized end to end with gradient-based methods if we can compute gradients through  the underlying optimization problem. 

There is a significant body of recent work that builds differentiable optimization solvers for various classes of optimization problems. \citet{Amos2017} introduced OptNet, an optimization layer that computes gradients for a quadratic program (QP) solver. \citet{Djolonga2017} and \citet{Tschiatschek2018} showed how to differentiate through submodular optimization. \citet{Barratt2018} and \citet{Agrawal2019} later developed a differentiable layer for solving cone programs and \citet{Agrawal2020} developed a differentiable solver for disciplined convex programs along with a rich implementation leveraging the \texttt{cvxpy} python package. \citet{Wilder2019} showed how to differentiate through linear programs. They added a quadratic regularization term to the objective function to ensure it is everywhere differentiable. \citet{ferber2020mipaal} expanded on this work to build an approximate differentiable layer for solving linear programs with integer constraints (mixed-integer programs). They approximately differentiated through a mixed-integer program by relaxing to a linear program and adding additional constraints via cutting plane methods to push the optimum of the linear program close to the integer-optimal solution. 
Motivated by the fact that running these optimization problems can be computationally expensive, \citet{elmachtoub2020smart} and \citet{Wilder2020} developed less computationally-demanding approaches. \citet{elmachtoub2020smart} developed a cheaper, convex surrogate optimization solver to approximate polyhedral and convex problems with a linear objective. \citet{Wilder2020} found a cheaper proxy optimization problem that is structurally similar to the true optimization problem. They created a differentiable layer for $k$-means clustering as a proxy for solving structurally-related-combinatorial-optimization problems like facility location that are typically solved via linear or integer programs.

These methods developed the machinery for differentiating through optimization routines and demonstrated their benefits experimentally.
Our main contribution is to provide explicit worst-case lower bounds on the performance gap between end-to-end and two-stage solutions. In doing so, we refine the standard explanation for why end-to-end approaches can outperform two-stage methods: most literature talks about end-to-end learning making the right error trade-offs, but here we show that even when one uses the optimal model implied by the first-stage loss, a mismatch between the first-stage loss and the optimization task of interest can lead to unbounded gaps in performance.

With respect to nonlinearity, \citet{elmachtoub2020smart} showed empirical evidence that end-to-end learning performs better relative to a two-stage approach with increasing non-linearity between the features and prediction targets. Our results leverage a different form of nonlinearity where the objective function is a nonlinear function of prediction targets.

Our results build on an insight from the stochastic optimization literature that the gap between the optimal solution to a given stochastic program over correlated variables, and an approximate solution that treats each variable independently, can be bounded in terms of the \emph{price of correlation} \citep{DBLP:journals/siamjo/BertsimasNT04, agrawal2012price}. \citet{agrawal2012price} recognized that in practice, estimating correlations is usually much harder than estimating the mean. They investigated the possible loss incurred by solving a stochastic optimization problem when the correlations are not known. We show for many settings studied in the optimization literature that end-to-end solutions can achieve the stochastic optimum, and as a result, they represent a useful alternative to approximate solutions even in settings with small price of correlation. 

\section{Characterizing Gap Between End-to-End and Two-Stage}\label{sec:theory}

We begin by formally defining the \emph{decision-focused learning} setting \citep[cf.][]{Donti2017}. Let $x\in \mathds{R}^{d}$ be observable inputs (e.g., time of day, weather, etc.) and  $y \in \mathds{R}^{d}$ be outputs drawn from some unknown distribution $\conditional{X}$ (e.g., the distribution of travel times given those observables). 
For any observable input, $x$, we want to optimize decision variables, $z$, to minimize our downstream task loss $\loss(z; x) = \mathbb{E}_{y\sim \conditional{x}}[f(y,z)]$ subject to a set of constraints, $\constraints$: $z^*(x) = \argmin_{z\in \constraints} \loss(z; x)$. In our example, if $y$ represents likely travel times given the current road conditions, $x$, then $z^*(x)$ would be the optimal route recommendation, taking into account uncertainty over $y$. 

Identifying $z^*$ is a stochastic programming problem. Since such problems are intractable in general, it is common practice instead to optimize some proxy deterministic optimization problem $\proxy(z; x) = f(\phi(x),z)$ that replaces the expectation over $P(Y|x)$ with some deterministic prediction $\hat{y} = \phi(x)$.
The hope is that the corresponding optimum $\tilde{z}^*(x) = \argmin_{z\in \constraints} f(\phi(x),z)$ will be similar to the stochastic optimum $z^*(x)$. For example, in the predict-then-optimize setting, one must first precommit to learning some functional of the distribution---usually, $\phi(x) = E[y|x]$---and then proceed with the deterministic optimization on the basis of these predictions. We refer to this as the two-stage approach.
It incurs a loss of, 
\begin{align}
    &\loss(z^*_{\text{two-stage}}; x) = \mathbb{E}_{y\sim \conditional{x}}[f(y,z^*_{\text{two-stage}})] \quad 
\end{align}
on the true stochastic problem, where  $\quad z^*_{\text{two-stage}}(x) =  \argmin_{z\in \constraints} f(\phi(x),z)$.\footnote{Note that there can be multiple optimal solutions $z^{*}_{\text{two-stage}}$ and $z'^{*}_{\text{two-stage}}$
that incur different loss $\loss(z^{*}_{\text{two-stage}}; x) \neq 
\loss(z'^{*}_{\text{two-stage}}; x)$. 
We assume the two-stage approach breaks ties arbitrarily since it has no knowledge of the downstream task. All results are unaffected by this assumption.} Two-stage methods are more computationally efficient because they do not require resolving the optimization problem at each iteration of the learning algorithm, and if they do not result in suboptimal performance in the stochastic optimization, they should thus be preferred.

Of course, one might expect that proxy optima from task-agnostic choices for $\phi(x)$ of this form are unlikely to correspond to the stochastic optimum; indeed, a key contribution  of this paper is formally investigating this intuition. In cases where indeed $\loss(z^{*}, x) < \loss(z^*_{\text{two-stage}}, x)$, a better approach can be selecting $\phi(x)$ with the task in mind. 
The end-to-end approach learns a parameterized model $\phi(x; \theta)$ such that the solutions to the deterministic proxy optimization function $\proxy(z; x)$ minimize the loss of the associated stochastic optimization problem.

Given its parameters $\theta$ and the observed input $x$, the end-to-end approach (1)~outputs some prediction $\phi(x; \theta)$; (2)~given that prediction, determines the optimal decisions $z_{\text{\text{end-to-end}}}(x,\theta)$; and (3) evaluates $z_{\text{\text{end-to-end}}}(x,\theta)$ on the true distribution, updating $\theta$ to minimize the resulting loss (typically by back-propagating through the optimization routine). 
The resulting solution takes the form
\begin{align}
    &\loss(z_{\text{end-to-end}}(x;\theta); x) = \mathbb{E}_{y\sim \conditional{x}}[f(y,z_{\text{end-to-end}})]\label{eqn:1}
\end{align}
where
\begin{align}
    &z_{\text{end-to-end}}(x;\theta) =  \argmin_{z\in \constraints} f(\phi(x; \theta),z) \qquad \\
    &\theta^* = \argmin_\theta  \loss(z^*_{\text{end-to-end}}(x, \theta); x).
\end{align} 
End-to-end approaches represent an attractive middle ground between solving the full stochastic problem and achieving computational tractability via strong independence assumptions.

In what follows, we focus on bounding the performance gap between the two-stage and end-to-end approaches  $\loss(z^{*}_{\text{two-stage}}; x)/ \loss(z^{*}_{\text{end-to-end}}; x)$.
For all of our results, we assume an ``error-free'' setting, where both the end-to-end and two-stage approaches have access to the Bayes optimal prediction for their respective loss functions, such that our results are driven by the choice of target, not by estimation error. The Bayes optimal predictor, $f^* = \argmin_f E[\loss(y, f(x))$] minimizes generalization error for a given loss, $\loss$, with features $x$ and targets $y$. The two-stage approach models the conditional expectation, $f^*(x)= E[Y|x]$, which is Bayes optimal when the loss is mean squared error, $\loss(y,f(x)) = (y-f(x))^2$.

We begin by making an important connection between the performance gap we have just defined and the \emph{price of correlation} introduced by \citet{agrawal2012price}. In words, the price of correlation is the worst-case loss in performance incurred by ignoring correlation in the random variables of a stochastic optimization problem.

\begin{definition}[Price of Correlation]
The price of correlation for a loss function $\loss$ is
$\text{POC} = {\loss(z')}/{ \loss(z^*)},$
where $z^{*}$ is the optimal solution to the stochastic program and  $z'$ is the solution minimizing a proxy stochastic program that makes the simplifying assumption that the random variables $\{y_i:i\in (1, \dots, d)\}$ are mutually independent.
\end{definition}

Making connections to the price of correlation will help us to establish performance gaps between two-stage and end-to-end. We first show that $\loss(z^{*}_{\text{two-stage}}; x)$ is lower bounded by the POC numerator, which implies the following result.

\begin{lemma}\label{lemma:corr_gap}
For any stochastic optimization problem over Boolean random variables $\{y_i:i\in (1, \dots, d)\}$, POC $= \loss(z^{*}_{\text{two-stage}}; x) / \loss(z^{*}; x)$.
\end{lemma}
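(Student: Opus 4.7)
The plan is to show that, for Boolean random variables, the two-stage deterministic objective $f(\phi(x),z)$ and the POC-proxy expected objective $\mathbb{E}_{y\sim\prod_i P(y_i\mid x)}[f(y,z)]$ coincide as functions of $z$. Once that identification is made, the two optimizations share the same argmin set; with a consistent tie-breaking rule we may pick $z^{*}_{\text{two-stage}} = z'$, giving $\loss(z^{*}_{\text{two-stage}}; x) = \loss(z'; x)$ on the true joint, and hence $\text{POC} = \loss(z^{*}_{\text{two-stage}}; x)/\loss(z^{*}; x)$ follows directly from the definition of POC.

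\textbf{Key steps.} First I would identify the Bayes predictor used by two-stage: under MSE, $\phi(x) = \mathbb{E}[Y\mid x]$, and for Boolean $y_i \in \{0,1\}$ the coordinate $\mathbb{E}[y_i\mid x] = P(y_i = 1 \mid x) =: p_i$ is exactly the Bernoulli parameter of the marginal $P(y_i\mid x)$. Thus $\phi(x) = p = (p_1,\ldots,p_d)$ is precisely the parameter vector of the product distribution invoked by the POC proxy. Second, I would invoke multilinearity: any $f(\cdot, z)$ on $\{0,1\}^d$ has a canonical multilinear extension to $[0,1]^d$ (reduce $y_i^k$ to $y_i$), and linearity of expectation over independent Bernoulli draws yields
\begin{equation*}
\mathbb{E}_{y \sim \prod_i \mathrm{Ber}(p_i)}[f(y,z)] \;=\; f(p, z) \;=\; f(\phi(x), z).
\end{equation*}
So the two-stage problem $\argmin_{z\in\constraints} f(\phi(x),z)$ and the POC-proxy problem $\argmin_{z\in\constraints}\mathbb{E}_{y\sim\prod_i P(y_i\mid x)}[f(y,z)]$ are literally the same optimization, completing the identification.

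\textbf{Main obstacle.} The substantive step is the multilinearity identification: $f(p,z) = \mathbb{E}_{y\sim\prod_i \mathrm{Ber}(p_i)}[f(y,z)]$ is a coincidence specific to Boolean inputs (higher moments collapse, since $\mathbb{E}[y_i^k] = p_i$) and fails for non-Boolean $y$---which is precisely why the hypothesis restricts to Boolean variables. The paper's hint that $\loss(z^{*}_{\text{two-stage}}; x)$ is lower bounded by the POC numerator $\loss(z'; x)$ is one direction of this identification: $z'$ optimizes over the broadest class of decisions that use only marginal information, so it dominates the specific marginal-only choice made by two-stage. The matching upper bound is automatic once one observes that $z^{*}_{\text{two-stage}}$ is itself a minimizer of the proxy objective, and together these give the claimed equality.
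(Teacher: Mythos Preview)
Your proposal is correct and follows essentially the same route as the paper: both argue that the two-stage proxy $f(\phi(x),z)$ and the independence-proxy $\mathbb{E}_{y\sim\prod_i P(y_i\mid x)}[f(y,z)]$ coincide for Boolean variables, then invoke a common tie-breaking rule to conclude $z^{*}_{\text{two-stage}}=z'$ and hence equal true-objective losses. Your multilinear-extension justification is in fact more explicit than the paper's own proof, which simply asserts that ``reducing all Boolean variables to their marginals'' and reducing them to their marginal expectations $E[y\mid x]$ yield the same proxy program.
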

All proofs appear in the appendix\footnote{For the appendix, please see \url{https://www.cs.ubc.ca/labs/beta/Projects/2Stage-E2E-Gap/}}; roughly, this result follows from the fact that the two-stage approach produces an equivalent proxy stochastic optimization program to the program where all random variables are assumed to be mutually independent.

If we knew that $\loss(z^{*}; x)$ was always equal to $\loss(z^{*}_{\text{end-to-end}}; x)$, we could leverage existing POC results to derive bounds on worst-case performance gaps between two-stage and end-to-end. Unfortunately, even the end-to-end approach can be suboptimal.
\begin{proposition}\label{prop:e2e-not-opt}
There exist stochastic optimization problems for which $\loss(z^{*}; x) < \loss(z^{*}_{\text{end-to-end}}; x)$.
\end{proposition}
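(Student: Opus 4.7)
The plan is to exhibit a concrete stochastic optimization problem in which the end-to-end approach's reachable decision set omits the stochastic optimum. The key observation is that any end-to-end decision must take the form $\argmin_{z\in\constraints} f(\hat y, z)$ for some $\hat y \in \mathbb{R}^d$, so as $\hat y$ varies, we trace out at most a $d$-parameter family of decisions in $\constraints$. When $\constraints$ has higher effective dimension than $d$, this family can miss the stochastic optimum entirely. I will exploit exactly this dimensionality mismatch.

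Concretely, I would take $d = 1$ (so $y, \hat y \in \mathbb{R}$), let the decision space be $\constraints = \mathbb{R}^2$ with $z = (z_1, z_2)$, and define
\[
f(y, z) = (y - z_1)^2 + (y^2 - z_2)^2.
\]
A short computation decomposes the expected loss into a variance term plus two quadratics centered at $\mathbb{E}[y]$ and $\mathbb{E}[y^2]$, so the expected loss is strictly convex in $z$ and uniquely minimized at $z^{*} = (\mathbb{E}[y], \mathbb{E}[y^2])$. In contrast, the deterministic proxy $f(\hat y, z)$ is uniquely minimized at $z(\hat y) = (\hat y, \hat y^2)$, so the end-to-end reachable set is the parabola $S = \{(a, a^2) : a \in \mathbb{R}\}$.

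The final step is Jensen's inequality: for any nondegenerate $y$, $\mathbb{E}[y^2] > (\mathbb{E}[y])^2$, so $z^{*}$ lies strictly above $S$ and is unreachable by any end-to-end predictor. Combined with strict convexity of the expected loss in $z$ (which makes $z^{*}$ its unique global minimizer), this forces $\loss(z_{\text{end-to-end}}; x) > \loss(z^{*}; x)$. A clean concrete instance is $y$ uniform on $\{0, 2\}$, where $z^{*} = (1, 2)$ attains expected loss $5$, while every end-to-end predictor picks a decision of the form $(a, a^2)$ and strictly exceeds this value.

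The main obstacle is conceptual rather than technical: recognizing that end-to-end suboptimality here arises not from limited model expressiveness but from the structural constraint that the deterministic argmin traces out only a $d$-parameter family in decision space. Once this is identified, verification follows immediately from strict convexity of the quadratic objective and Jensen's inequality, with no combinatorial or stochastic-programming machinery required.
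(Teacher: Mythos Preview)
Your construction is correct and the argument is clean: the expected loss is separable and strictly convex with unique minimizer $(\mathbb{E}[y],\mathbb{E}[y^2])$, the deterministic proxy forces $z=(\hat y,\hat y^2)$, and Jensen's inequality places the stochastic optimum strictly off that parabola for any nondegenerate $y$. The concrete $\{0,2\}$ example checks out.

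Your route is genuinely different from the paper's. The paper works in a Boolean setting with two perfectly correlated events and a finite decision set $\{z^*,z_1,z_2\}$; the end-to-end proxy there treats the events as independent given the predicted marginals, so any choice of $(p_1,p_2)$ that puts mass on both realized scenarios necessarily also puts mass on the two impossible scenarios, where $z^*$ has infinite cost. This forces end-to-end to abandon $z^*$ in favor of $z_1$ or $z_2$, yielding a multiplicative gap of order $C$ for arbitrary $C>2$. Your argument instead exploits a dimensionality mismatch---a one-dimensional prediction cannot parameterize a two-dimensional optimum---and needs only Jensen and strict convexity, with no combinatorics and no appeal to correlation structure. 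What the paper's construction buys is (i) an arbitrarily large multiplicative gap rather than a fixed one, and (ii) direct alignment with the Boolean/price-of-correlation framing used in Lemma~\ref{lemma:corr_gap} and Theorem~\ref{thm:e2e-opt}. What yours buys is a shorter, fully continuous example that isolates the structural limitation of deterministic proxies without any stochastic-programming scaffolding.
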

Nevertheless, for many stochastic optimization problems with POC results, the end-to-end approach does provably obtain an optimal solution. 

\begin{theorem}\label{thm:e2e-opt}
End-to-end is optimal with respect to the stochastic objective for the problems described in Examples~1,2, and 3 from \citet{agrawal2012price}: (i) Two-stage minimum cost flow, (ii) Two-stage stochastic set cover, (iii) Stochastic optimization minimization with monotone submodular cost function.
\end{theorem}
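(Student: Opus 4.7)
The plan is to show, for each of the three problems, that end-to-end has enough freedom in its prediction to recover the stochastic optimum $z^*$. The starting observation is a general principle: end-to-end attains the stochastic optimum whenever there exists some prediction $\hat{y}$ satisfying $\argmin_{z \in \constraints} f(\hat{y}, z) = z^*(x)$, because end-to-end can train $\theta$ so that $\phi(x;\theta) = \hat{y}$. The task therefore reduces to exhibiting, for each problem, an explicit construction of such an $\hat{y}$ from the joint distribution of $Y$.

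For \emph{(i) two-stage minimum cost flow} and \emph{(ii) two-stage stochastic set cover}, the stochastic objective decomposes into a deterministic first-stage cost plus an expected second-stage recourse cost. The stochastic optimum $z^*$ can be taken to be a vertex of a polyhedral feasible region, and the expected recourse term can be reinterpreted as the value of a deterministic LP evaluated at an ``equivalent'' parameter vector that bakes in the relevant moments of the joint distribution. I would construct $\hat{y}$ to realize exactly this equivalent parameter vector so that the deterministic proxy $f(\hat{y}, z)$ selects $z^*$ as its optimum; the construction proceeds directly from the recourse formulation given in \citet{agrawal2012price}.

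For \emph{(iii) stochastic minimization with a monotone submodular cost function}, the proxy is no longer linear in $z$, so the argument requires more care. Working with the specific parameterization used in Example~3 of \citet{agrawal2012price}, varying $\hat{y}$ reweights the marginal contributions of each element and produces a rich family of monotone submodular objectives; I would construct an explicit $\hat{y}$ whose induced submodular minimizer coincides with $z^*$. Combining all three cases then gives $\loss(z^*_{\text{end-to-end}}; x) = \loss(z^*; x)$, since the reverse inequality $\loss(z^*; x) \le \loss(z^*_{\text{end-to-end}}; x)$ is immediate from $z^*$ being the stochastic optimum.

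The main obstacle is expected to be case (iii). In the LP-structured cases, the construction benefits from standard parametric-LP and duality arguments that tell us which vertex becomes optimal under a given cost vector. In the submodular case, by contrast, the parameterization $\hat{y} \mapsto f(\hat{y}, \cdot)$ must be shown to span enough of the space of monotone submodular functions on $\constraints$ to make $z^*$ a minimizer, and verifying this appears to depend tightly on the particular functional form of the submodular cost chosen by Agrawal et al. rather than following from any general principle about submodular minimization.
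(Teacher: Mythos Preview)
Your opening principle is exactly the paper's: for each problem, exhibit a prediction $\hat y$ for which $\argmin_{z\in\constraints} f(\hat y,z)=z^*$. Where your plan diverges is in the constructions. For (i) and (ii) the paper does something much simpler than moment-matching or parametric-LP/duality: since the predictions are marginal probabilities $p_1,\dots,p_n$ of Boolean events, it just sets them to $0$/$1$ so the proxy becomes a \emph{deterministic} instance whose unique optimum is $z^*$. Concretely, in (i) it sets $p_1=\cdots=p_{z^*}=1$ and the rest to $0$ (so buying exactly $z^*$ first-stage capacity is optimal because $c^{II}>c^{I}$); in (ii) it sets $p_i=1$ for items covered by $z^*$ and $0$ otherwise (so dropping a purchased set always hurts and adding one never helps, again using $c^{II}>c^{I}$). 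No recourse-LP reinterpretation or duality is needed, and none of the ``relevant moments of the joint distribution'' beyond knowing $z^*$ itself enters the construction.

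For (iii) your diagnosis of the difficulty is off. In Example~3 the decision variable is \emph{binary}: choose between paying $\mathbb{E}_S[c(S)]$ and paying a constant $C$. So there is no submodular minimization over $z$ and no need to ``span a space of monotone submodular functions.'' The proxy with predicted marginals $p$ compares $C$ to the expected cost $\sum_S P_p(S)\,c(S)$ under the \emph{product} distribution induced by $p$; the whole task reduces to choosing $p$ so that this number lands on the correct side of $C$. The paper achieves this by an intermediate-value argument that uses only monotonicity of $c$: pick a set $S'$ and an element $e$ with $c(S'\setminus\{e\})\le C^*\le c(S')$ where $C^*=\mathbb{E}_S[c(S)]$, set $p_i=1$ on $S'\setminus\{e\}$, $p_i=0$ outside $S'$, and choose $p_e\in[0,1]$ so that $p_e\,c(S')+(1-p_e)\,c(S'\setminus\{e\})=C^*$. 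Your plan would not arrive at this because it is looking for richness in the wrong place (the $z$-space rather than the scalar cost), and submodularity per se plays no role---only monotonicity is used.
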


The key idea behind the proof for each part is proving that for any optimal solution $z^{*}$, end-to-end outputs a deterministic target prediction $y_{z^{*}} \in \mathds{R}^{d}$ such that $\argmin_{z} f(y_{z^{*}},z) = z^{*}$. 

Theorem~\ref{thm:e2e-opt} and Examples~1,2, and 3 from \citet{agrawal2012price} imply the following bounds on worst-case gaps for $\loss(z^{*}_{\text{two-stage}}; x) / \loss(z^{*}_{\text{end-to-end}}; x)$.
\begin{corollary}
In the worst case, $\loss(z^{*}_{\text{two-stage}}; x) / \loss(z^{*}_{\text{end-to-end}}; x)$ is:
\begin{enumerate}
    \item[(i)] $\Omega\left(2^{d}\right)$ for two-stage minimum cost flow;
    \item[(ii)] $\Omega\left(\sqrt{d} \frac{\log (\log (d))}{\log(d)}\right)$ for two-stage stochastic set cover; and
    \item[(iii)] $\Omega\left(\frac{e}{1-e}\right)$ for stochastic optimization minimization with monotone submodular cost function problems.
\end{enumerate}
\end{corollary}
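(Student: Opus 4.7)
The plan is to chain together the two results that already appear in the excerpt: Lemma~\ref{lemma:corr_gap} and Theorem~\ref{thm:e2e-opt}. For each of the three problem classes listed (two-stage minimum cost flow, two-stage stochastic set cover, and stochastic minimization with a monotone submodular cost function), Theorem~\ref{thm:e2e-opt} guarantees that end-to-end attains the stochastic optimum, so $\loss(z^{*}_{\text{end-to-end}}; x) = \loss(z^{*}; x)$. Lemma~\ref{lemma:corr_gap} in turn identifies $\loss(z^{*}_{\text{two-stage}}; x) / \loss(z^{*}; x)$ with the price of correlation. Composing these two equalities gives
\[
\frac{\loss(z^{*}_{\text{two-stage}}; x)}{\loss(z^{*}_{\text{end-to-end}}; x)} \;=\; \frac{\loss(z^{*}_{\text{two-stage}}; x)}{\loss(z^{*}; x)} \;=\; \text{POC},
\]
so any worst-case lower bound on POC for one of these problem classes immediately yields a worst-case lower bound of the same order on the end-to-end/two-stage gap.

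Given this reduction, the three bounds claimed are simply the POC lower bounds already established for Examples~1, 2, and 3 of \citet{agrawal2012price}: namely $\Omega(2^d)$ for minimum cost flow, $\Omega\!\left(\sqrt{d}\,\log\log d/\log d\right)$ for stochastic set cover, and $\Omega(e/(1-e))$ for monotone submodular stochastic minimization. Each of these comes with an explicit adversarial construction, so exhibiting the instance is not something we need to redo; we only need to check that the witness distributions inhabit the class of problems to which Lemma~\ref{lemma:corr_gap} and Theorem~\ref{thm:e2e-opt} apply.

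That compatibility check is where I would direct the only real care. Lemma~\ref{lemma:corr_gap} is stated over Boolean random variables, so for each cited POC construction I would verify that the per-scenario targets can be taken to be Boolean (e.g., the edge-demand indicators in the flow example and the element-activation indicators in the set cover and submodular examples are already $\{0,1\}$-valued in the constructions of \citet{agrawal2012price}). I would likewise confirm that each construction falls into the exact problem class handled by Theorem~\ref{thm:e2e-opt}, so that $\loss(z^{*}_{\text{end-to-end}}; x) = \loss(z^{*}; x)$ is available on that specific witness and not merely on the class generically. Once both checks pass, the corollary follows by the substitution above with no further computation. I do not expect a substantive obstacle here: the bulk of the proof is bookkeeping, and the only mildly nontrivial step is confirming that the witnesses for the POC lower bounds respect the Boolean assumption of Lemma~\ref{lemma:corr_gap}.
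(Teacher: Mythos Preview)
Your proposal is correct and is exactly the argument the paper uses: the corollary is stated as an immediate consequence of Theorem~\ref{thm:e2e-opt} together with the POC lower bounds from Examples~1--3 of \citet{agrawal2012price}, with Lemma~\ref{lemma:corr_gap} supplying the identification of the two-stage ratio with POC. The extra care you flag about verifying that the witness constructions use Boolean targets is appropriate and does not depart from the paper's approach.
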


While this corollary shows that we can leverage known results from the POC literature to identify large performance gaps between the end-to-end and two-stage approaches (e.g., setting \emph{(i)}), the POC literature focuses mostly on finding settings such as \emph{(ii)} or \emph{(iii)} where the cost of ignoring correlations grows slowly as a function of the problem size.
POC results also tend to arise in the context of stochastic optimization where learning a prediction model of inputs is not common practice and therefore end-to-end learning is not always applicable \citep{agrawal2012price,lu2015reliable}.

We now turn our attention to a setting not previously studied in the POC literature, in which we will show that correlations can lead to unbounded worst-case performance gaps between the two-stage and end-to-end approaches. This setting is particularly amenable to the ``predict-then-optimize" approach. 
Specifically, we allow for two target vectors $y_1\in \mathds{R}^d$ and $y_2\in \mathds{R}^d$, aiming to capture scenarios where coefficients of the objective function depends on more than one unknown quantity (e.g., demand and travel time) and our objective function takes the form
\begin{equation}\label{eqn:element_wise}
f(y,z)= \sum_{i=1}^{d} \gamma (y_{1_{i}},y_{2_{i}})   f_{i}(z) + C,
\end{equation}
where $f_{i}(z)$ are arbitrary functions of the decision variables $z$ (e.g., monomials of degree one or two for linear (LP) and quadratic programs (QP) respectively) and $\gamma(y_{1_{i}},y_{2_{i}})  : \mathds{R}\times \mathds{R}\rightarrow \mathds{R}$ represent coefficients as a function of prediction targets.

When $\gamma(y_{i,1},y_{i,2}) = y_{i,1} \cdot  y_{i,2}$ (e.g., demand-weighted travel time if $y_1$ and $y_2$ represent demand and travel times, respectively),  it is possible to construct settings where the gap between the end-to-end and two-stage approaches grows without bound in the dimensionality of the problem, $d$. 

\begin{theorem}\label{thm:unbounded}
For $\gamma(a,b) = a\cdot b$, there exists an optimization problem with objective function of the form given by Equation \ref{eqn:element_wise} and distribution $\joint$ such that $\loss(z^{*}_{\text{two-stage}}; x) / \loss(z^{*}_{\text{end-to-end}}; x) \rightarrow \infty$ as $d \rightarrow \infty$.
\end{theorem}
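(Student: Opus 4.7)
The plan is to exhibit an explicit linear program together with a distribution $\joint$ that exploits the mismatch, under the bilinear coupling $\gamma(a,b)=ab$, between the product of marginal expectations $\mathbb{E}[y_{1,i}]\,\mathbb{E}[y_{2,i}]$ used by the two-stage approach and the true coefficient $\mathbb{E}[y_{1,i}\,y_{2,i}]$ appearing in the stochastic objective. Since any within-item covariance drives a wedge between these two quantities, a carefully chosen correlated distribution should be able to make the two-stage proxy strictly prefer an item that the stochastic optimum rejects, and the simplex structure of an LP will amplify a single-coordinate disagreement into an arbitrarily large loss ratio.

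To instantiate this, I would take $f_i(z)=z_i$ with $\constraints$ the unit simplex on $\mathbb{R}^d$, so that minimizing $\sum_i y_{1,i}y_{2,i}z_i$ reduces to selecting the coordinate with the smallest coefficient. I would let $X$ be a single point and, for $i\in\{1,\ldots,d-1\}$ independently across $i$, take $(y_{1,i},y_{2,i})$ to equal $(d-1,\,1/(d-1))$ or $(1/(d-1),\,d-1)$ each with probability $1/2$; for the ``decoy'' item $d$, set $y_{1,d}=y_{2,d}=\sqrt{d-1}$ deterministically. Each anti-correlated item then has marginal means $\tfrac{1}{2}((d-1)+1/(d-1))=\Theta(d)$, yielding a two-stage coefficient of order $\Theta(d^2)$, yet its true expected coefficient is $\mathbb{E}[y_{1,i}\,y_{2,i}]=(d-1)\cdot\tfrac{1}{d-1}=1$. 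The decoy's two-stage coefficient and true expected coefficient are both equal to $d-1$.

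Given these numbers, I would verify that for all sufficiently large $d$ the two-stage LP strictly prefers the decoy (since $d-1<\tfrac{1}{4}((d-1)+1/(d-1))^2$ eventually), so $z^{*}_{\text{two-stage}}$ is the $d$-th standard basis vector and its true expected loss equals $d-1$. The stochastic optimum, in contrast, places mass on any $i<d$ with true expected cost $1$, and end-to-end can realize this outcome because the LP output depends only on the ordering of the predicted coefficients $\hat{y}_{1,i}\hat{y}_{2,i}$: any $\phi(x;\theta^*)$ that makes item $1$'s proxy coefficient strictly smaller than the rest forces $z_{\text{end-to-end}}$ to be the first basis vector and yields $\loss(z_{\text{end-to-end}};x)=1$. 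The ratio is therefore at least $d-1$, which diverges. The main obstacle is choosing the anti-correlated support and the decoy's deterministic value simultaneously so that (i) the product of marginals of the anti-correlated items dominates the decoy's two-stage coefficient, steering two-stage toward the decoy, while (ii) the true expected cost of the anti-correlated items stays strictly below the decoy's, so that end-to-end's optimum lies elsewhere; the scaling above, exploiting the asymmetry between $\tfrac{1}{2}((d-1)+1/(d-1))$ and $\sqrt{d-1}$, makes both inequalities hold with room to spare for all large enough $d$.
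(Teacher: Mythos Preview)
Your construction is correct and proves the theorem, but it differs from the paper's proof in structure. Both arguments exploit the same phenomenon---an anti-correlated pair $(y_{1,i},y_{2,i})$ for which $\mathbb{E}[y_{1,i}]\,\mathbb{E}[y_{2,i}]$ substantially overstates $\mathbb{E}[y_{1,i}y_{2,i}]$, causing two-stage to avoid the cheap option and commit to a ``decoy'' whose true cost is high---but they package it differently.

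The paper sets up $d$ independent binary choices: for each $i$ one must select either an anti-correlated pair (whose product is always $0$, since the support is $\{(0,N),(N,0)\}$) or a deterministic pair $(N/2-\epsilon,N/2-\epsilon)$. Two-stage, seeing marginal means $N/2$ versus $N/2-\epsilon$, picks the deterministic pair in every one of the $d$ slots and accumulates cost $\approx d N^{2}/4$, while end-to-end achieves the constant $C$. Here the per-item distribution is fixed and the divergence genuinely comes from summing $d$ bad decisions. Your version instead uses a single simplex choice among $d$ items and lets the \emph{distribution itself} scale with $d$: the anti-correlated support $\{(d-1,1/(d-1)),(1/(d-1),d-1)\}$ keeps the true product at $1$ while inflating the product of marginals to $\Theta(d^{2})$, and the decoy at $\sqrt{d-1}$ is calibrated so two-stage prefers it and pays $d-1$.

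Your route is lighter to state (one simplex LP, two targets per item) and, as a byproduct, shows that even for fixed $d\geq 2$ the ratio can be made arbitrarily large by scaling the support. The paper's route is a bit heavier notationally but makes the role of dimensionality more intrinsic: with all distributional parameters held fixed, it is the number of independent decisions that drives the gap, which aligns more closely with the narrative that the gap ``grows without bound in the dimensionality of the problem.'' Both are valid instantiations of the existence claim.
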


The key idea is to construct a distribution where two-stage wrongly estimates a coefficient to be nonzero when the true expected value is zero. We correlate some pairs $a,b$ where $\gamma$ is applied so that at least one is zero with probability 1, but  the expectation of both $a$ and $b$ is above zero. We set up the optimization problem so that the two-stage approach systematically avoids selecting decision variables with zero cost, causing it to pay an unnecessary cost for every dimension of the problem. Note that Theorem \ref{thm:unbounded} is an existence result and therefore must hold for more general $\gamma$ (e.g., a function of any number of the prediction targets).

Furthermore, the end-to-end solution is optimal with respect to the stochastic objective. 

\begin{lemma}\label{thm:linear-e2e}
$\loss(z^{*}; x) = \loss(z^{*}_{\text{end-to-end}}; x)$ if the objective function is of the form given by Equation \eqref{eqn:element_wise}.
\end{lemma}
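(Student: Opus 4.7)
The goal is to show that, in the error-free, unconstrained-capacity regime, an end-to-end learner with objectives of the form (\ref{eqn:element_wise}) can always match the true stochastic optimum. My plan is (i) to reduce the stochastic objective to a deterministic one of the same structural form by linearity of expectation, (ii) to exhibit predictions $\phi(x;\theta)$ that make the end-to-end proxy objective coincide with that reduced objective, and then (iii) to conclude that $z_{\text{end-to-end}}$ and $z^{*}$ minimize the same function over $\constraints$ and therefore incur the same loss.

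\textbf{Step 1: reduce the stochastic objective.} Starting from $\loss(z;x) = \mathbb{E}_{y\sim P(Y|x)}[f(y,z)]$ and plugging in (\ref{eqn:element_wise}), linearity of expectation gives
\begin{equation*}
\loss(z;x) = \sum_{i=1}^{d} c_i^{*}(x)\, f_i(z) + C, \qquad c_i^{*}(x) := \mathbb{E}_{y\sim P(Y|x)}[\gamma(y_{1_i}, y_{2_i})].
\end{equation*}
So the stochastic optimum $z^{*}(x)$ depends on $P(Y|x)$ only through the $d$ scalars $c_i^{*}(x)$.

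\textbf{Step 2: realize the coefficients with end-to-end predictions.} The end-to-end proxy objective given predictions $\phi(x;\theta) = (\phi_1(x;\theta), \phi_2(x;\theta)) \in \mathds{R}^{2d}$ is
\begin{equation*}
f(\phi(x;\theta), z) = \sum_{i=1}^{d} \gamma\bigl(\phi_{1_i}(x;\theta), \phi_{2_i}(x;\theta)\bigr) f_i(z) + C.
\end{equation*}
The key step is to argue that there exists a parameter setting $\theta^{\circ}$ such that
\begin{equation*}
\gamma\bigl(\phi_{1_i}(x;\theta^{\circ}), \phi_{2_i}(x;\theta^{\circ})\bigr) = c_i^{*}(x) \quad \text{for every } i = 1,\dots,d.
\end{equation*}
Each $c_i^{*}(x)$ is an expectation of $\gamma(y_{1_i}, y_{2_i})$ and hence lies in the convex hull of the image of $\gamma$; assuming $\gamma$ is continuous on a connected domain (so its image is an interval), the intermediate value theorem guarantees a preimage. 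Since we have assumed unconstrained model capacity for $\phi$, we can pick $\theta^{\circ}$ realizing these preimages at input $x$.

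\textbf{Step 3: conclude.} With $\theta = \theta^{\circ}$, the proxy objective $f(\phi(x;\theta^{\circ}),z)$ agrees with the reduced stochastic objective $\loss(z;x)$ for every $z\in \constraints$ (they differ by no terms at all since the additive constant matches). Hence $z_{\text{end-to-end}}(x;\theta^{\circ}) = \argmin_{z\in\constraints} f(\phi(x;\theta^{\circ}),z) \in \argmin_{z\in\constraints} \loss(z;x)$. Because $\theta^{*}$ is chosen to minimize the true loss $\loss(z_{\text{end-to-end}}(x;\theta);x)$ over $\theta$, we obtain
\begin{equation*}
\loss(z_{\text{end-to-end}}(x;\theta^{*});x) \le \loss(z_{\text{end-to-end}}(x;\theta^{\circ});x) = \loss(z^{*};x),
\end{equation*}
and the reverse inequality is automatic since $z^{*}$ minimizes $\loss$. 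Equality follows.

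\textbf{Main obstacle.} The technical step is Step~2: making precise the claim that the end-to-end pipeline can realize arbitrary coefficients $c_i^{*}(x)$ in the image of $\gamma$. For the pairwise-product case $\gamma(a,b)=ab$ that drives Theorem~\ref{thm:unbounded}, this is immediate because the image is all of $\mathds{R}$, but in general it requires a mild regularity condition on $\gamma$ (continuity with connected image) together with the standing error-free, unbounded-capacity assumption on $\phi$. Once this is granted, the rest of the argument reduces to linearity of expectation and the fact that two functions with identical values over $\constraints$ share a minimizer.
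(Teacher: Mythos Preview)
Your argument is correct and follows the same three-step scaffold as the paper: (i) use linearity of expectation to reduce $\loss(z;x)$ to $\sum_i c_i^{*}(x)\,f_i(z)+C$ with $c_i^{*}(x)=\mathbb{E}[\gamma(y_{1_i},y_{2_i})]$, (ii) exhibit end-to-end predictions whose induced coefficients equal $c_i^{*}(x)$, and (iii) conclude that the proxy and the true objective share a minimizer over $\constraints$.

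The only substantive difference is in how Step~(ii) is realized. The paper does not appeal to the intermediate value theorem; it simply sets $\phi_{1_i}(x)=c_i^{*}(x)$ and $\phi_{2_i}(x)=1$, which works because it is implicitly treating the product case $\gamma(a,b)=ab$ (so $\gamma(c_i^{*},1)=c_i^{*}$). Your IVT/connected-image argument handles a broader class of $\gamma$ at the cost of an explicit regularity hypothesis, and you are right to flag that some such hypothesis is needed for the lemma as stated in full generality. In short: same strategy, with your Step~(ii) being a more careful---and more honestly conditioned---version of the paper's explicit construction.
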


We now construct a gap between the two-stage and the end-to-end approaches for any nonlinear $\gamma$. First, we show an obvious implication of the linearity of expectation.

\begin{lemma}\label{lemma:lin}
For all $y,y'\in \mathds{R}$, $\gamma(y,y')$ is linear iff  $\forall \marginal, \mathbb{E}_{\marginal}[\gamma(y,y')] =  \gamma(\mathbb{E}_{\marginal}[y],\mathbb{E}_{\marginal}[y'])$.
\end{lemma}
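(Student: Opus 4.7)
The plan is to prove each direction separately. The forward direction follows immediately from linearity of expectation: if $\gamma(y, y') = \alpha y + \beta y' + c$ for constants $\alpha, \beta, c \in \mathds{R}$, then for any distribution $P$,
\[
\mathbb{E}_{P}[\gamma(y, y')] \;=\; \alpha\,\mathbb{E}_{P}[y] + \beta\,\mathbb{E}_{P}[y'] + c \;=\; \gamma\bigl(\mathbb{E}_{P}[y],\, \mathbb{E}_{P}[y']\bigr).
\]

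For the converse, the key step is to specialize the commutation hypothesis to two-point distributions. Fix arbitrary $(a, b), (c, d) \in \mathds{R}^2$ and $p \in [0, 1]$, and let $P$ place mass $p$ on $(a, b)$ and mass $1 - p$ on $(c, d)$. Applying the hypothesis to this $P$ yields
\[
p\,\gamma(a, b) + (1 - p)\,\gamma(c, d) \;=\; \gamma\bigl(p a + (1 - p) c,\; p b + (1 - p) d\bigr),
\]
which says exactly that $\gamma$ is affine along every line segment in $\mathds{R}^2$. To convert this into a global affine representation, I would set $g(y, y') := \gamma(y, y') - \gamma(0, 0)$, which inherits the line-segment equation and satisfies $g(0, 0) = 0$. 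Plugging $(c, d) = (0, 0)$ into the line-segment equation gives the homogeneity $g(p a, p b) = p\,g(a, b)$ on $[0, 1]$; combining this with the line-segment equation at $p = 1/2$ yields the additivity $g(u + s, v + t) = g(u, v) + g(s, t)$. Additivity plus homogeneity on $[0, 1]$ extends to homogeneity over all of $\mathds{Q}$, and one concludes that $g(y, y') = \alpha y + \beta y'$, so $\gamma(y, y') = \alpha y + \beta y' + c$.

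The main obstacle is this final upgrade from rational to real homogeneity, which is not automatic without a regularity assumption on $\gamma$ (otherwise one can construct pathological $\mathds{Q}$-linear but non-$\mathds{R}$-linear examples using a Hamel basis). In the predict-then-optimize setting, however, $\gamma$ maps prediction targets to objective-function coefficients and is implicitly continuous (indeed usually smooth), so the standard characterization of solutions to Cauchy's functional equation under continuity (or even mere measurability or local boundedness) closes the argument and gives the affine form claimed.
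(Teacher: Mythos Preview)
Your argument is correct and shares the paper's key device: two-point distributions. The paper is more economical, though. It takes the identity $\gamma(\alpha y_1 + (1-\alpha)y_2) = \alpha\gamma(y_1) + (1-\alpha)\gamma(y_2)$ for all $y_1, y_2, \alpha$ as the characterization of linearity and argues the converse by contrapositive: if $\gamma$ is nonlinear, some triple $(y_1, y_2, \alpha)$ violates this identity, and the two-point distribution with $\Pr[y=y_1]=\alpha$, $\Pr[y=y_2]=1-\alpha$ is the required witness. You instead argue the converse directly and push further to the explicit affine form $\alpha y + \beta y' + c$, which is more than the paper actually needs but makes the meaning of ``linear'' completely explicit.

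One correction worth noting: your regularity caveat is unnecessary. The segment identity already gives full $\mathds{R}$-homogeneity of $g$ without any continuity hypothesis. For $t>1$, apply the $[0,1]$-homogeneity with $p=1/t$ to the point $(ta,tb)$ to get $g(a,b)=\tfrac{1}{t}\,g(ta,tb)$, hence $g(ta,tb)=t\,g(a,b)$; negatives then follow from additivity via $g(-u)=-g(u)$. With additivity and real (not just rational) homogeneity in hand, $g(y,y')=y\,g(1,0)+y'\,g(0,1)$ is immediate, so the Hamel-basis pathology never enters.
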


Intuitively, only when the whole objective function $f$ is linear are we guaranteed that $f$'s expected value is equivalent to $f$ applied to the expected values of each of its inputs. A two-stage approach moves the expectation inside the optimization parameterization since it only models the expected values of the target vector $y$. We show how wrongly estimating the expectation of the parameterization leads to suboptimal decisions.   

\begin{theorem}\label{thm:two_stage_gap}
For any nonlinear $\gamma$, we can construct a distribution $\marginal$, functions $f_{i}(z)$, and constraints $g(z)$ such that $\loss(z^{*}; x) < \loss(z^{*}_{\text{two-stage}}; x)$ for $d\geq 2$.
\end{theorem}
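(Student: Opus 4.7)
The approach leans entirely on Lemma~\ref{lemma:lin}: nonlinearity of $\gamma$ delivers a joint distribution over $(y,y')\in\mathds{R}^{2}$ whose plug-in surrogate $\gamma(\mathbb{E}[y],\mathbb{E}[y'])$ differs from the true expected value $\mathbb{E}[\gamma(y,y')]$, and my goal is to amplify that single mismatch into a decision-level mistake on a $d=2$ binary-selection problem. Concretely, I would make one coordinate ``stochastic'' (drawn from the bad distribution from Lemma~\ref{lemma:lin}) and the other coordinate ``deterministic'' with a carefully chosen coefficient that is strictly between the true and plug-in coefficients of the stochastic coordinate. Because the two-stage surrogate sees the wrong coefficient on the stochastic coordinate, it will strictly prefer the deterministic coordinate, whereas the stochastic objective correctly prefers the stochastic coordinate.

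\textbf{Construction and verification.} Using Lemma~\ref{lemma:lin}, fix $Q$ on $\mathds{R}^{2}$ with $\alpha := \mathbb{E}_{Q}[\gamma(y,y')] \neq \gamma(\mathbb{E}_{Q}[y],\mathbb{E}_{Q}[y']) =: \beta$, and WLOG assume $\alpha < \beta$ (otherwise swap the roles of the two coordinates below). Next, obtain a pair $(u,v)\in\mathds{R}^{2}$ with $\gamma(u,v)=c$ for some $c\in(\alpha,\beta)$: since $\alpha$ is an average of $\gamma$-values over $Q$'s support, some support point $(u_{0},v_{0})$ must satisfy $\gamma(u_{0},v_{0})\le\alpha$, and continuity of $\gamma$ applied along the segment from $(u_{0},v_{0})$ to $(\mathbb{E}_{Q}[y],\mathbb{E}_{Q}[y'])$ (where $\gamma=\beta$) yields an intermediate $(u,v)$ with $\gamma(u,v)\in(\alpha,\beta)$ by the intermediate value theorem. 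Set $d=2$, let $\marginal$ draw $(y_{1_{1}},y_{2_{1}})\sim Q$ independently of the point mass $(y_{1_{2}},y_{2_{2}})=(u,v)$, and take $f_{1}(z)=z_{1}$, $f_{2}(z)=z_{2}$, $C=0$, with feasible set $\constraints=\{(1,0),(0,1)\}$. Plugging into Equation~\eqref{eqn:element_wise} gives the two-stage surrogate $\beta z_{1}+cz_{2}$, so $z^{*}_{\text{two-stage}}=(0,1)$ (since $c<\beta$), while the stochastic objective is $\alpha z_{1}+cz_{2}$, so $z^{*}=(1,0)$ (since $\alpha<c$). Evaluating the true expected loss, $\loss(z^{*};x)=\alpha<c=\loss(z^{*}_{\text{two-stage}};x)$, which is the desired strict inequality.

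\textbf{Main obstacle.} The only delicate step is producing an $(u,v)$ with $\gamma(u,v)\in(\alpha,\beta)$. For continuous $\gamma$ the intermediate value argument above is clean, but for general nonlinear $\gamma$ (e.g., merely Borel-measurable) the range of $\gamma$ might skip $(\alpha,\beta)$ entirely. The natural fallback is to replace dimension~$2$ by a convex mixture of an atom $(u_{0},v_{0})\in\mathrm{supp}(Q)$ with $\gamma(u_{0},v_{0})\le\alpha$ and the mean point $(\mathbb{E}_{Q}[y],\mathbb{E}_{Q}[y'])$, tuning the mixing weight so the expected $\gamma$ on dimension~$2$ lies in $(\alpha,\beta)$; this leaves the surrogate coefficient of dimension~$2$ uncontrolled, so a small perturbation of the atom's location (permitted by the footnote that tie-breaking is immaterial) would be applied to guarantee strict preference for dimension~$2$ in the surrogate. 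Handling this fallback case carefully, together with bookkeeping the symmetric situation $\alpha>\beta$, is the only additional work beyond the clean construction sketched above.
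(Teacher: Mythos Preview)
Your approach is essentially the paper's: both invoke Lemma~\ref{lemma:lin} to obtain a stochastic first coordinate with $\alpha=\mathbb{E}[\gamma]\neq\gamma(\mathbb{E})=\beta$, place a deterministic point mass on the second coordinate whose $\gamma$-value lies strictly between $\alpha$ and $\beta$, and set up a two-variable selection problem (the paper uses $z_{1}+z_{2}\ge 1$, you use $\constraints=\{(1,0),(0,1)\}$) so that two-stage is forced onto the wrong coordinate. The only difference is how the intermediate point is manufactured---the paper perturbs the mean by $\pm\epsilon$ whereas you run the intermediate value theorem along a segment to a support point---and both constructions tacitly rely on continuity of $\gamma$, which you correctly isolate as the one delicate step.
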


Although we show that the two-stage predict-then-optimize approach can achieve poor performance in general, there do exist special cases in which it is optimal.

\begin{theorem}\label{thm:linear}
$\loss(z^{*}; x) = \loss(z^{*}_{\text{two-stage}}; x)$ if 
the objective function is of the form given by Equation \eqref{eqn:element_wise} and 
either (i) $\gamma$ is a linear function of its inputs; or (ii) $(Y_1 \perp Y_2) | X$ and $\gamma(y_{1_i},y_{2,i}) = y_{1_i} \cdot y_{2_i}$ for all $i$.
\end{theorem}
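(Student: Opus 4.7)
The plan is to show that in each of the two cases, the true stochastic objective $\mathbb{E}_{y\sim P(Y|x)}[f(y,z)]$ coincides \emph{pointwise in $z$} with the two-stage proxy objective $f(\phi(x),z)$ when $\phi(x) = \mathbb{E}[Y|x]$. Once the two functions of $z$ are equal on $\constraints$, their minimizers coincide as sets, so any decision $z^*_{\text{two-stage}}$ picked under the two-stage tie-breaking rule is also a minimizer of $\loss(\cdot;x)$; hence $\loss(z^*_{\text{two-stage}};x) = \loss(z^*;x)$.

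First I would expand the stochastic loss using Equation~\eqref{eqn:element_wise} and linearity of expectation to obtain
\begin{equation*}
\mathbb{E}_{y\sim P(Y|x)}[f(y,z)] \;=\; \sum_{i=1}^{d} \mathbb{E}\bigl[\gamma(y_{1_i},y_{2_i})\,\big|\,x\bigr]\, f_i(z) + C.
\end{equation*}
The two-stage proxy evaluated at $\phi(x)=\mathbb{E}[Y|x]$ is
\begin{equation*}
f(\phi(x),z) \;=\; \sum_{i=1}^{d} \gamma\bigl(\mathbb{E}[y_{1_i}|x],\mathbb{E}[y_{2_i}|x]\bigr)\, f_i(z) + C.
\end{equation*}
So the whole claim reduces to showing that, term by term, $\mathbb{E}[\gamma(y_{1_i},y_{2_i})\mid x]=\gamma(\mathbb{E}[y_{1_i}\mid x],\mathbb{E}[y_{2_i}\mid x])$ under either hypothesis.

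For case (i), this identity is immediate from Lemma~\ref{lemma:lin} applied conditionally on $x$: linearity of $\gamma$ allows us to commute the expectation through $\gamma$. For case (ii), with $\gamma(a,b)=a\cdot b$ and $(Y_1\perp Y_2)\mid X$, the conditional independence gives $\mathbb{E}[y_{1_i}y_{2_i}\mid x] = \mathbb{E}[y_{1_i}\mid x]\,\mathbb{E}[y_{2_i}\mid x]$, which is precisely $\gamma$ of the conditional means. Either way, the termwise identity holds, so $\loss(z;x) = f(\phi(x),z)$ for every feasible $z$.

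The only subtlety is the tie-breaking issue flagged in the paper's footnote: multiple $z$ can minimize the two-stage proxy with different true losses. The key observation to address this is that because our two functions of $z$ are \emph{literally equal} on $\constraints$, their argmin sets are identical, so \emph{every} two-stage optimizer $z^*_{\text{two-stage}}$ (no matter how ties are resolved) is a stochastic optimizer, and vice versa; hence $\loss(z^*_{\text{two-stage}};x)=\loss(z^*;x)$. I expect this pointwise-equality step to be the main thing to handle carefully; the rest is a direct consequence of linearity of expectation and the structural hypotheses on $\gamma$ or the conditional independence of the targets.
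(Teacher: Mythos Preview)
Your proposal is correct and follows essentially the same approach as the paper: expand the objective via Equation~\eqref{eqn:element_wise}, pull the expectation inside the sum, and then use linearity of $\gamma$ (case~(i)) or conditional independence with $\gamma(a,b)=ab$ (case~(ii)) to commute the expectation through $\gamma$, yielding equality of the stochastic and two-stage objectives. Your explicit treatment of tie-breaking via pointwise equality of the objectives is a minor clarification the paper's proof leaves implicit, but otherwise the arguments are the same.
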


This result follows directly from linearity of expectation and gives some explanation for the wide use of the two-stage approach in practice (In the appendix, we show more generally that two-stage is optimal for any optimization problem $f(y,z)$ that is linear in $y$.). The first condition of Theorem \ref{thm:linear} is implicitly known in the literature but not stated in this form with multiple prediction targets \cite{elmachtoub2020smart}. 
In Section \ref{sec:applications}, we argue that these conditions for two-stage optimality are unlikely to hold in various cases where two-stage approaches are nevertheless used in practice.

\section{Simulation Study}\label{sec:experiments}

We have seen that even in what one would imagine would be benign  settings---such as a linear program where the coefficients of the decision variables are the product of two predictions, $y_1   y_2$---correlations between target variables can lead to potentially unbounded gaps between the performance of two-stage and end-to-end approaches.
In this section, we contextualize these results by (1) providing intuition for how such correlations might arise in practice and (2) experimentally exploring how correlation impacts the performance gap through a simulation. 

We will consider the example of a delivery company that faces a choice of where to locate its facilities, with the aim of minimizing  average demand-weighted travel time between these facilities and its customers. This can be formulated as an integer program (and potentially relaxed to an LP) of the form $\argmin_{z} \sum_{i,j}\mathbb{E}_{\mathcal{D}}[T_{ij} d_{j}]z_{ij}$ with constraints that ensure (1) at most $k$ locations are chosen for placing facilities and (2) a customer $i$ can only connect to a location $j$ if $j$ is chosen to be a facility. 
Every assignment $z_{i,j}$ between customer $i$ and facility $j$ is weighted by the expected demand-weighted travel time between $i$ and $j$.
This parameterization is broken down into the element-wise product of two conceptually different learning targets: (1)~$T_{i,j}$, the travel times between every pair of endpoints (i.e., time cost between customer and facility); and (2)~$d_{j}$, the demand of every customer $j$.

The magnitude of the performance gap between end-to-end optimization and two-stage optimization will depend on the degree of correlation between these targets $T$ and $d$. If they were independent, then Theorem \ref{thm:linear} would apply and it would be safe to use a two-stage approach, learning separate models to predict $T$ and $d$. However, there are many reasons why these targets might be correlated. For example, the weather may have some effect on demand (e.g., when it is raining, customers' demand for deliveries increases) as well as travel times (e.g., in the rain, congestion tends to increase, both because people drive more cautiously and because accidents nevertheless become more common). If the models for demand and travel times do not condition on the weather, then a two-stage approach will wrongly estimate expected-demand-weighted travel times, potentially leading to suboptimal decisions. One solution is for practitioners to find the right conditioning set to ensure the targets are conditionally independent. However, sources of potential correlation are both abundant and challenging to discover.\footnote{There is a close connection to finding the appropriate adjustment set in the causality literature. If one can specify the causal graph that generates the data, it is possible to find minimal adjustment sets \citep[see][for details]{pearl_2009}.} 

We wanted to experimentally investigate the dependence between correlation and the performance gap given plausible correlation structure, and so designed  a synthetic benchmark for this demand-weighted facility location problem. Our input is a bipartite graph between $n$ customers and $m$ facilities and the output is an assignment of customers to facilities such that only $k$ facilities are connected.  Travel times $T_{i,j}$ and demands $d_{j}$ are sampled from some ground truth distribution $\mathcal{D}$, which we parameterize with $\rho$ that controls correlation structure. We want to learn to output $\hat{T_{ij}}$ and $\hat{d_{j}}$ from samples of the true distribution $\mathcal{D}$ in order to choose assignments $\hat{z}_{i,j}$ that minimize $\sum_{i=1}^{n}\sum_{j=1}^{m}\mathbb{E}_{\mathcal{D}}[T_{ij}d_{j}]\hat{z_{ij}}$ subject to:
\begin{align*}\label{eqn:ilp}
        &\textstyle \sum_{i=1}^{n} z_{ij}=1 \text{ for all } j=1,\ldots,m\\
         &\textstyle \sum_{j=1}^{m} z_{ij}\leq Mx_{i} \text{ for all } i=1,\ldots,n,\quad \textstyle \sum^{n}_{i=1} x_{i} \leq k \\
         & z_{ij} \in \{0,1\} \text{ for all } i=1,\ldots,n \text{ and } j=1,\ldots,m\\
         & x_{0,1} \in \{0,1\} \text{ for all } i=1,\ldots,n.
\end{align*}

\subsection{End-to-End Approach}\label{sec:end_to_end}
For the end-to-end demand-weighted facility location solution, we used a one-layer feed-forward neural network which takes as input a matrix of features for every edge $i$,$j$ between customer $i$ and facility $j$ and outputs a demand-weighted travel-time matrix. During training we used a differentiable optimization layer for solving the linear-programming relaxation of the facility location problem based on the work of \citet{Amos2017} and \citet{Wilder2019}. \citeauthor{Amos2017} showed how to find the gradient of solutions for a class of convex programs with respect to the input parameters (i.e., demand and travel times) by differentiating through the Karush-Kuhn-Tucker (KKT) conditions, which are linear equations expressing the gradients of the objective and constraints around the optimum. The optimal solution to a linear program may not be differentiable with respect to its parameters, so \citeauthor{Wilder2019} add a quadratic smoothing term $||z||^{2}_{2}$ weighted by $\zeta$ to create a strongly concave quadratic program. This allows for the optimal solution to move continuously with changes in parameters. The objective function becomes $\sum_{i=1}^{n}\sum_{j=1}^{m}\mathbb{E}_{\mathcal{D}}[T_{ij}d_{j}]z_{ij} + \zeta||z||^{2}_{2}$. 

We implemented this smoothed differentiable version of a linear program with the encoding described earlier using the \texttt{cvxpylayers} package \citep{Agrawal2020}. To find an integer solution at test time, we solved this program optimally with a mixed-integer solver without the smoothing term. 
Since the feed-forward network and the quadratic program are all differentiable with respect to their inputs, we could train the system end-to-end with backpropagation. Following \citet{Wilder2019}, we defined the loss of the network to be the solution quality of the customer-facility assignments output from the linear program $\hat{z}$ given the ground truth demand-weighted distances $c$. By the objective function, the loss is then $c^{T} \hat{z}$.

We used the ADAM optimizer with a learning rate of 0.01 and performed 500 training iterations for each experiment. We set our quadratic penalty term $\zeta$ to be 10. We trained on an 8-core machine with Intel i7 3.60GHz processors and 32 GB of memory and an Nvidia Titian Xp GPU. To get integer solutions at test time, we used the mixed-integer solver GLPK in the \texttt{cvxpy} python package. Training times were very small. Each end-to-end model took a few minutes to train with each MIP requiring less than a second to find the optimal solution.

\subsection{Experimental Setup}

We now construct a simple graph distribution to simulate our working example. The principle behind the distribution is that there are some routes between customers and facilities whose travel times correlate with customer demand (e.g., inclement weather increases both demand and travel times within a city centre) and some that are independent (e.g., weather does not affect travel time on freeway routes). We model some latent variable $l$ (e.g., weather) that correlates demand and travel times (e.g., demand and travel times will both be higher with inclement weather). Every customer has a correlation between demand and travel time controlled by some parameter $\rho$ for half of the facilities $F_{1}$ and zero correlation for the other half of facilities $F_{2}$. The parameter $\rho$ controls the correlations that the latent factor $l$ induces between $T_{i,j}$ and $D_i$ for the facilities in $F_1$. When $\rho$ is 0, there is no correlation between the two; with $\rho=1$, they are perfectly correlated; and when $\rho=-1$, they are perfectly anticorrelated. Our distribution $\mathcal{D}$ is as follows,
\begin{align*}
\mathcal{D} = \begin{cases}
    &l_{i} \sim \mathcal{N}(0, \rho^2 \sigma_{li}^2)\\
    &T_{i,j} \sim  \mathcal{N}(\mu_{i,j} + |l_i|, (1-\rho^{2}) \sigma_{t}^2) \quad\forall j \in F_{1}\\
    &T_{i,j} \sim \mathcal{N}(\mu_{i,j},  \sigma_{t}^2) \quad\forall j \in F_{2}\\
    &D_{i} \sim \mathcal{N}(l_{i} + \mu_{i}, \sigma_{d}).
    \end{cases}
\end{align*}
In our experiments, we set $m,n=20,\sigma_{li}=20.25$, $\sigma_{d}=\sigma_{t}=3$, $\forall j \in F_{1}, \mu_{i,j}=5.5$,   $\forall j \in F_{2},\mu_{i,j}=6, K=1$. We evaluated three approaches; our end-to-end approach described above, a two-stage approach, and an OPT approach. The two-stage approach is Bayes optimal for mean-squared error. It selects assignments based on sample means of training samples for every customer demand and customer-facility travel time. Knowing the expected value for every (demand, travel time) product is sufficient for optimal performance, therefore we defined OPT as selecting assignments based on sample means from the training set of every (demand, travel time) product. We found an integer-optimal solution for every approach at test time.
We ran an experiment for 11 values of $\rho$ between -1 and 1 at equal intervals, which we evaluated on 10 random seeds. For each evaluation, we generated 1000 samples from our distribution and left out 200 as test data. Please see \url{https://www.cs.ubc.ca/labs/beta/Projects/2Stage-E2E-Gap/} for our code and data generation process.

\subsection{Results}

\begin{figure}[t]
    \centering
    \includegraphics[trim={0cm 0.1cm 0 0},clip,width=\columnwidth]{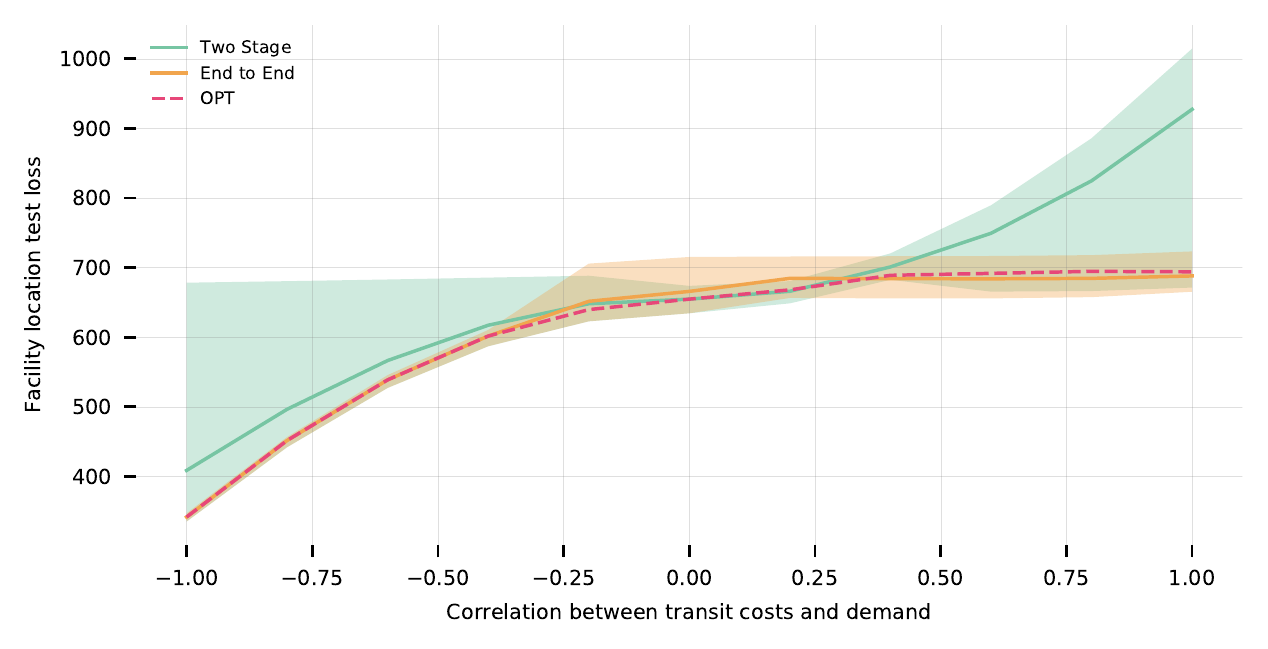}
    \caption{Facility location test performance vs correlation structure with samples from $\mathcal{D}$. End-to-end refers to the method described in Section \ref{sec:end_to_end}; two-stage approach learns sample means for every demand and travel time; and OPT learns sample means for every (demand, travel time) product. The shaded areas represent 90th percentiles.}
    \label{fig:performance}
\end{figure}

Our results are shown in Figure \ref{fig:performance}. The end-to-end approach performed equally as good as OPT across all values of $\rho$. The performance gap grew substantially with increasing positive correlation towards $\rho=1$. With positive correlation, the two-stage approach substantially underestimated the demand-weighted travel times to $F_{1}$ facilities and tended to incorrectly choose to place the facility to be in $F_{1}$. As we expect by Theorem \ref{thm:linear}, there was no gap when $\rho=0$ as demands and travel times are independent of one another. With increasing negative correlation towards $\rho=-1$, we did not see much change in the gap. Negative correlation caused the two-stage approach to substantially overestimate the demand-weighted travel times but the two-stage approach tended to pick the correct facility. This illustrates an important point for understanding when gaps occur in practice. Incorrect estimations are not enough for bad decisions. The estimation must be wrong in a direction that causes a change in decision.
The large confidence intervals of the two-stage approach towards the extremes at $\rho=-1$ and $\rho=1$ suggest that two-stage is also more sensitive to noise in the distribution. An end-to-end approach will recognize that the value of $F_{1}$ facilities relative to $F_{2}$ facilities will grow as $\rho \rightarrow -1$, and as a result, it is more robust to estimation error. With infinite training samples, the two-stage approach will make the correct decision for $\rho \leq 0$, but as $\rho \rightarrow -1$, the margin of error stays constant while wrong decisions become more consequential.

\section{Two-Stage Optimization with Potentially Correlated Targets in Real Applications}\label{sec:applications}

While we have seen that large performance gaps can arise in a synthetic setting, we would of course like to understand whether a similar phenomenon arises in real applications. Unfortunately, we are not practitioners ourselves, and we were not able to identify any publicly available dataset giving the raw inputs to a two-stage optimization approach. We observed that the relevant literature tends to concentrate on the optimization part of the two-stage optimization problem rather than publishing raw data underlying the prediction phase. 

In the absence of such a dataset, we sought other evidence that two-stage optimization was being deployed in practical settings where our theoretical results suggest that end-to-end optimization could be expected to achieve better performance. We identified a wide range of  applications from the literature in which a two-stage approach was used and where there exists potential for correlation between multiplied targets. The examples we found came from a range of different optimization problems, but all involved products of the demand and cost predictions on different travel routes (for buses, taxis, supply chains, and product delivery), and all failed to account for latent factors that might induce correlation in these targets, such as weather or seasonality. Of course, without access to the original data, we cannot be certain that accounting for these latent effects would have resulted in significant changes to solution quality, but they serve as useful examples to show how such correlations can potentially occur. Furthermore, we were struck by how many such examples we were able to discover. Overall, we hope that our work will be taken as a call to arms for the community to explore end-to-end learning on these applications, and that it will  encourage practitioners to publish raw data underlying both prediction and optimization phases.

We now survey the specific work we identified.
Line-planning problems involve taking the products of travel time and demands for bus routes \citep{mauttone2009route}. Demand and travel time can be correlated through precipitation as increasing precipitation has been found to decrease bus ridership  \citep{zhou2017impacts} and increase travel time \citep{tsapakis2013impact}.
In ride-sharing programs, the weights of the matching problem that assigns drivers to customers involves taking the product of the probabilities that a future request is real and the times to serve that request \citep{alonso2017predictive}. Travel times are similarly affected by weather and taxi demand tends to decrease with rainfall during non-rush hours but increase with rainfall during rush hours \citep{chen2017impact}. 
We could not find any papers on either ride sharing or line planning whose authors explicitly conditioned on weather to render these targets conditionally independent, so our analysis indicates that end-to-end approaches likely offer scope for improved solutions.

In supply chain planning, companies optimize routes across supplier stages to customers. For example, \citet{zhen2016supply} investigate supply chain planning for automobile manufacturers. Route costs are proportional to the product of both customer demands and transportation costs between stages, which can be correlated through seasonality. \citet{irum2020sales} show that demand for automobiles peaks in the spring and fall and \citet{van2020disruption} show it is more difficult to procure freight during peak season and transportation delays (and higher costs) are more common.

Finally, demand-weighted facility location problems place facilities to respond to customers and minimize transportation costs; as such, the cost of each route involves the product of a prediction for customer demands and the transportation costs from facilities to customers \citep{sert2020freight}. Transportation delays are higher during peak season \citep{van2020disruption} and most product demands will have seasonal effects.
\citeauthor{sert2020freight} use anonymized data so it is difficult to speculate on specific sources of correlation, but they do not explicitly discuss controlling for these effects.

\section{Conclusions}\label{sec:conclusions}

This paper presented lower bounds on the worst-case difference between two-stage and end-to-end approaches. First, we derive bounds by drawing a connection to \emph{price-of-correlation} results in stochastic optimization. Then, we prove results for a practical setting that apply to benign sources of non-linearity: taking the product of two predictions in an otherwise linear optimization problem is sufficient to construct gaps that grow with the dimensionality of the problem. We identify a number of applications with coefficients of this form where end-to-end solutions may offer large improvements in practice.

\newpage
\section*{Acknowledgements}

This work was funded by an NSERC Discovery Grant, a DND/NSERC Discovery Grant Supplement, a CIFAR Canada AI Research Chair (Alberta Machine Intelligence Institute), a Compute Canada RAC Allocation, a GPU grant from Nvidia, awards from Facebook Research and Amazon Research, and DARPA award FA8750-19-2-0222, CFDA \#12.910 (Air Force Research Laboratory).

\newpage
\bibliography{references}

\begin{thebibliography}{28}
\providecommand{\natexlab}[1]{#1}

\bibitem[{Agrawal et~al.(2020)Agrawal, Amos, Barratt, Boyd, Diamond, and
  Kolter}]{Agrawal2020}
Agrawal, A.; Amos, B.; Barratt, S.; Boyd, S.; Diamond, S.; and Kolter, J.~Z.
  2020.
\newblock Differentiable Convex Optimization Layers.
\newblock In \emph{Advances in Neural Information Processing Systems 33},
  {NeurIPS} '20, 9558--9570.

\bibitem[{Agrawal et~al.(2019)Agrawal, Barratt, Boyd, Busseti, and
  Moursi}]{Agrawal2019}
Agrawal, A.; Barratt, S.; Boyd, S.; Busseti, E.; and Moursi, W.~M. 2019.
\newblock Differentiating through a cone program.
\newblock \emph{Journal of Applied and Numerical Optimization}, 1: 107--115.

\bibitem[{Agrawal et~al.(2012)Agrawal, Ding, Saberi, and Ye}]{agrawal2012price}
Agrawal, S.; Ding, Y.; Saberi, A.; and Ye, Y. 2012.
\newblock Price of correlations in stochastic optimization.
\newblock \emph{Operations Research}, 60(1): 150--162.

\bibitem[{Alonso-Mora, Wallar, and Rus(2017)}]{alonso2017predictive}
Alonso-Mora, J.; Wallar, A.; and Rus, D. 2017.
\newblock Predictive routing for autonomous mobility-on-demand systems with
  ride-sharing.
\newblock In \emph{2017 IEEE/RSJ International Conference on Intelligent Robots
  and Systems (IROS)}, 3583--3590. IEEE.

\bibitem[{Amos and Kolter(2017)}]{Amos2017}
Amos, B.; and Kolter, J.~Z. 2017.
\newblock {OptNet}: Differentiable optimization as a layer in neural networks.
\newblock In \emph{Proceedings of the 34th International Conference on Machine
  Learning}, volume~70 of \emph{Proceedings of Machine Learning Research},
  136--145. {PMLR}.

\bibitem[{Bahulkar et~al.(2018)Bahulkar, Szymanski, Baycik, and
  Sharkey}]{bahulkar2018community}
Bahulkar, A.; Szymanski, B.~K.; Baycik, N.~O.; and Sharkey, T.~C. 2018.
\newblock Community detection with edge augmentation in criminal networks.
\newblock In \emph{2018 IEEE/ACM International Conference on Advances in Social
  Networks Analysis and Mining (ASONAM)}, 1168--1175. IEEE.

\bibitem[{Barratt(2018)}]{Barratt2018}
Barratt, S. 2018.
\newblock On the differentiability of the solution to convex optimization
  problems.
\newblock \emph{{arXiv} preprint}, 1804.05098.

\bibitem[{Bertsimas, Natarajan, and
  Teo(2004)}]{DBLP:journals/siamjo/BertsimasNT04}
Bertsimas, D.; Natarajan, K.; and Teo, C. 2004.
\newblock Probabilistic Combinatorial Optimization: Moments, Semidefinite
  Programming, and Asymptotic Bounds.
\newblock \emph{{SIAM} J. Optim.}, 15(1): 185--209.

\bibitem[{Centola(2018)}]{centola2018behavior}
Centola, D. 2018.
\newblock \emph{How behavior spreads: The science of complex contagions},
  volume~3.
\newblock Princeton University Press Princeton, NJ.

\bibitem[{Chen et~al.(2017)Chen, Zhang, Gao, Geng, and Li}]{chen2017impact}
Chen, D.; Zhang, Y.; Gao, L.; Geng, N.; and Li, X. 2017.
\newblock The impact of rainfall on the temporal and spatial distribution of
  taxi passengers.
\newblock \emph{PloS one}, 12(9): e0183574.

\bibitem[{Demirovi{\'c} et~al.(2019)Demirovi{\'c}, Stuckey, Bailey, Chan,
  Leckie, Ramamohanarao, and Guns}]{demirovic2019investigation}
Demirovi{\'c}, E.; Stuckey, P.~J.; Bailey, J.; Chan, J.; Leckie, C.;
  Ramamohanarao, K.; and Guns, T. 2019.
\newblock An investigation into prediction+ optimisation for the knapsack
  problem.
\newblock In \emph{International Conference on Integration of Constraint
  Programming, Artificial Intelligence, and Operations Research}, 241--257.
  Springer.

\bibitem[{Djolonga and Krause(2017)}]{Djolonga2017}
Djolonga, J.; and Krause, A. 2017.
\newblock Differentiable Learning of Submodular Models.
\newblock In \emph{Advances in Neural Information Processing Systems 31},
  {NIPS} '17, 1013--1023.

\bibitem[{Donti, Amos, and Kolter(2017)}]{Donti2017}
Donti, P.~L.; Amos, B.; and Kolter, J.~Z. 2017.
\newblock Task-based end-to-end model learning in stochastic optimization.
\newblock In \emph{Advances in Neural Information Processing Systems 31},
  {NIPS} '17, 5484--5494.

\bibitem[{Elmachtoub and Grigas(2020)}]{elmachtoub2020smart}
Elmachtoub, A.~N.; and Grigas, P. 2020.
\newblock Smart "Predict, then Optimize".
\newblock arXiv:1710.08005.

\bibitem[{Ferber et~al.(2020)Ferber, Wilder, Dilkina, and
  Tambe}]{ferber2020mipaal}
Ferber, A.; Wilder, B.; Dilkina, B.; and Tambe, M. 2020.
\newblock Mipaal: Mixed integer program as a layer.
\newblock In \emph{Proceedings of the AAAI Conference on Artificial
  Intelligence}, volume~34, 1504--1511.

\bibitem[{Lu, Ran, and Shen(2015)}]{lu2015reliable}
Lu, M.; Ran, L.; and Shen, Z.-J.~M. 2015.
\newblock Reliable facility location design under uncertain correlated
  disruptions.
\newblock \emph{Manufacturing \& Service Operations Management}, 17(4):
  445--455.

\bibitem[{Mauttone and Urquhart(2009)}]{mauttone2009route}
Mauttone, A.; and Urquhart, M.~E. 2009.
\newblock A route set construction algorithm for the transit network design
  problem.
\newblock \emph{Computers \& Operations Research}, 36(8): 2440--2449.

\bibitem[{Pearl(2009)}]{pearl_2009}
Pearl, J. 2009.
\newblock \emph{Causality}.
\newblock Cambridge University Press.

\bibitem[{Sert et~al.(2020)Sert, Hedayatifar, Rigg, Akhavan, Buchel, Saadi,
  Kar, Morales, and Bar-Yam}]{sert2020freight}
Sert, E.; Hedayatifar, L.; Rigg, R.~A.; Akhavan, A.; Buchel, O.; Saadi, D.~E.;
  Kar, A.~A.; Morales, A.~J.; and Bar-Yam, Y. 2020.
\newblock Freight Time and Cost Optimization in Complex Logistics Networks.
\newblock \emph{Complexity}, 2020.

\bibitem[{Tayyeba~Irum(2020)}]{irum2020sales}
Tayyeba~Irum, C.~H. 2020.
\newblock Sales drop in major auto markets during {January}.
\newblock \emph{S\&P Global Market Intelligence}.

\bibitem[{Tsapakis, Cheng, and Bolbol(2013)}]{tsapakis2013impact}
Tsapakis, I.; Cheng, T.; and Bolbol, A. 2013.
\newblock Impact of weather conditions on macroscopic urban travel times.
\newblock \emph{Journal of Transport Geography}, 28: 204--211.

\bibitem[{Tschiatschek, Sahin, and Krause(2018)}]{Tschiatschek2018}
Tschiatschek, S.; Sahin, A.; and Krause, A. 2018.
\newblock Differentiable submodular maximization.
\newblock In \emph{Proceedings of the 27th International Joint Conference on
  Artificial Intelligence}, {IJCAI} '18, 2731--2738. Morgan Kaufmann.

\bibitem[{van~der Tuin and Pel(2020)}]{van2020disruption}
van~der Tuin, M.~S.; and Pel, A.~J. 2020.
\newblock The disruption transport model: computing user delays resulting from
  infrastructure failures for multi-modal passenger \& freight traffic.
\newblock \emph{European Transport Research Review}, 12(1): 1--10.

\bibitem[{Wilder, Dilkina, and Tambe(2019)}]{Wilder2019}
Wilder, B.; Dilkina, B.; and Tambe, M. 2019.
\newblock Melding the Data-Decisions Pipeline: Decision-Focused Learning for
  Combinatorial Optimization.
\newblock In \emph{Proceedings of the 33rd {AAAI} Conference on Artificial
  Intelligence}, {AAAI} '19, 1658--1665. {AAAI} Press.

\bibitem[{Wilder et~al.(2020)Wilder, Ewing, Dilkina, and Tambe}]{Wilder2020}
Wilder, B.; Ewing, E.; Dilkina, B.; and Tambe, M. 2020.
\newblock End to end learning and optimization on graphs.
\newblock In \emph{Advances in Neural Information Processing Systems 33},
  {NeurIPS} '20, 539--548.

\bibitem[{Yan and Gregory(2012)}]{yan2012detecting}
Yan, B.; and Gregory, S. 2012.
\newblock Detecting community structure in networks using edge prediction
  methods.
\newblock \emph{Journal of Statistical Mechanics: Theory and Experiment},
  2012(09): P09008.

\bibitem[{Zhen, Zhuge, and Lei(2016)}]{zhen2016supply}
Zhen, L.; Zhuge, D.; and Lei, J. 2016.
\newblock Supply chain optimization in context of production flow network.
\newblock \emph{Journal of Systems Science and Systems Engineering}, 25(3):
  351--369.

\bibitem[{Zhou et~al.(2017)Zhou, Wang, Li, Yue, Tu, and Cao}]{zhou2017impacts}
Zhou, M.; Wang, D.; Li, Q.; Yue, Y.; Tu, W.; and Cao, R. 2017.
\newblock Impacts of weather on public transport ridership: Results from mining
  data from different sources.
\newblock \emph{Transportation research part C: emerging technologies}, 75:
  17--29.

\end{thebibliography}
\newpage
\appendix
\onecolumn
\section{Appendix} 
\begin{proof}[Proof of Lemma \ref{lemma:corr_gap}]
We need to show for any stochastic optimization problem, POC $\leq \loss(z^{*}_{\text{two-stage}}; x) / \loss(z^{*}; x)$.

By the definition of the price of correlation, POC $=\loss(z^{'}; x)/ \loss(z^{*}; x)$, where $z^{'}$ is the optimal solution to a proxy stochastic program over Boolean variables that makes the assumption that all random variables are mutually independent. This is equivalent to reducing all Boolean variables to their marginals.

$z^{*}_{\text{two-stage}}$ is the optimal solution to a proxy stochastic program where all random variables are reduced to their marginal expectation $E[y|x]$, which is equivalent for mutually independent Boolean variables $y$. 
Therefore  $z^{*}_{\text{two-stage}}$ and $z^{'}$ are
optimal solutions to the same proxy stochastic program. In case there are multiple optimal solutions, we assume $z^{*}_{\text{two-stage}}$ and $z^{'}$ are found using the same tie-breaking scheme and therefore have the same loss, $\loss(z^{'}; x) =\loss(z^{*}_{\text{two-stage}})$. Therefore, 
 POC $= \loss(z^{*}_{\text{two-stage}}; x) / \loss(z^{*}; x)$.
\end{proof}

\begin{proof}[Proof of Proposition \ref{prop:e2e-not-opt}]
First, we provide some intuition for the proof. We define the end-to-end approach to be restricted to outputting a deterministic prediction vector $y$ for the downstream problem. We construct an example where this $y$ vector represents probabilities of two Boolean events. For end-to-end to be optimal, it must output marginal probabilities $y$ that elicit the optimal solution on the downstream task. The proof shows that there is no such setting of $y$ to elicit the optimal solution. We construct an example where to be optimal, end-to-end would need to output marginal probabilities for events 1 and 2 such that only two scenarios have positive probability (1) when both events happen and (2) when neither event happens. The only way to assign positive probability on both of these scenarios is to also assign positive probability on the two scenarios where only one event happens. This causes the end-to-end approach to find the wrong solution.

Formally, we need to show there exist stochastic optimization problems such that $\loss(z^{*}; x) < \loss(z^{*}_{\text{end-to-end}}; x)$. We construct a setting where $\loss(z^{*}; x)$ is a $C$ factor better for any constant $C>2$.

Let $b_{1}$,$b_{2}$ be Boolean events of a set $S$ and $E_{S\in D}[f(z,S)]$ be a stochastic optimization problem. The two-stage approach returns the optimal solution given the marginal probabilities $p_{1}$,$p_{2}$ of events $b_{1},b_{2}$ occurring. The end-to-end approach sets $p_{1},p_{2}$ arbitrarily to produce a corresponding $z$ with best downstream loss:

\begin{equation*}
    \loss(z^{*}_{\text{end-to-end}}; x) = \min_{p_{1},p_{2}} E_{S \in D}[f(\argmin_{z}E_{S' \in p_{1},p_{2}}[f(z,S'] ,S)]
\end{equation*}

We define the following distribution over $b_{1},b_{2}$:

\begin{align*}
  D &= \begin{cases}P(b_1=T, b_2=T) = 0.5\\
     P(b_1=F, b_2=F) = 0.5\\
     otherwise~0.
    \end{cases}
\end{align*}

Let $f(z,S)$ have the following cost matrix:

\begin{center}
\begin{tabular}{l|cccc}
& $S_{1}$ & $S_{2}$ & $S_{3}$ & $S_{4}$\\
& $b_{1}=T,b_{2}=T$ & $b_{1}=F,b_{2}=F$ & $b_{1}=T,b_{2}=F$ & $b_{1}=F,b_{2}=T$\\
\hline
$z^{*}$& $1$ & $1$ & $\infty$ & $\infty$\\
$z_{1}$& $\epsilon$ & $C$ & $\epsilon$ & $\epsilon$\\
$z_{2}$& $C$ & $\epsilon$ & $\epsilon$ & $\epsilon$
\end{tabular}
\end{center}

$z^{*}$ is optimal for $C>2$. $z_{1}$ and $z_{2}$ both cost an $O(C)$ factor greater than $z^{*}$. We prove that end-to-end must select either suboptimal solutions $z_{1}$ or $z_{2}$ regardless of how $p_{1}$ and $p_{2}$ are set. There are two cases:
\begin{enumerate}
    \item If end-to-end sets $p_1$ and $p_2$ such that all probability is on $S_{1}$ or $S_{2}$, $z^{*}$ will not be chosen since $z_{1}$ is best for $S_{1}$ and $z_{2}$ is best for $S_{2}$.
    \item If end-to-end sets $p_1$ and $p_2$ such that there is non-zero probability on both $S_{1}$ and $S_{2}$, there must also be non-zero probability on $S_{3}$ and $S_{4}$ because $p_1$ and $p_2$ are marginal probabilities. Since $z^{*}$ has infinite cost for $S_{3}$ and $S_{4}$, $z_{1}$ and $z_{2}$ will always look better to end-to-end than $z^{*}$.
\end{enumerate}
\end{proof}

\begin{proof}[Proof of Theorem \ref{thm:e2e-opt}: Optimality of end-to-end for (i) two-stage minimum cost flow]

This setting is from Example 1 of \citet{agrawal2012price}. There are $n$ Boolean events with marginal probabilities $p_{1}, ..., p_{n}$. Each event corresponds to the existence a sink node with unit demand. Each sink node connects to a single hub node. The optimization problem is to buy capacity for the hub node such that all demand is satisfied. Before seeing the realization of sink nodes, in the first stage you can buy capacity $z$ at price $c^{I}(z)$. Once you have seen the realization of sink nodes, you then must buy enough extra capacity $x$ at higher cost $c^{II}(x)$ to satisfy the demand requests. The goal is to optimize $z$ to minimize expected costs. The problem can be written formally in a single stage as follows:
    
\begin{equation*}
    \argmin_{z} c^{I}(z) + \sum_{i=z}^{n} P\left((\sum_j y_{j}) = i\right)c^{II}(n-i)
\end{equation*}
s.t., $y_{1},...,y_{n}$ are the realization of Boolean events, and $c^{II}(k) > c^{I}(k), \forall k$\\
A two-stage approach wrongly estimates $\sum_{i=z}^{n} P((\sum_j b_{j}) = i)$ by multiplying marginals together to compute probabilities of sets of events occurring. 
We now prove $\loss(z^{*}_{\text{end-to-end}};x) = \loss(z^{*};x)$ for this setting:\\

The end-to-end approach can set $p_{1}...p_{n}$ arbitrarily. For any given $z^{*}$, it is enough to construct $p_{1}...p_{n}$ such that $z^{*}_{\text{end-to-end}} = z^{*}$. Idea is to set $p_{1}...p_{n}$ to make the two-stage costs sufficiently high for $z>z^{*}$ and low for $z<z^{*}$ so $z^{*}$ is optimal.
    
We can put all probability on a single set $S$ such that $p_{1},...,p_{z^{*}} = 1$ and $p_{z^{*}+1},...,p_{n}=0$. That way, we pay zero second-stage cost for increasing $z$ above $z^{*}$ since we've satisfied all demand, but add first-stage costs. If we decrease $z$ by $k$, we must pay $c^{II}(k)$ second-stage cost but only save $c^{I}(k)$. Since $c^{II}(k) > c^{I}(k)$, decreasing $z$ also increases costs. Since any change in $z$ increases costs,  $z^{*}_{\text{end-to-end}} = \argmin E_f(z,S) = z^{*}$.

\end{proof}
    
\begin{proof}[Proof of Theorem \ref{thm:e2e-opt}: Optimality of end-to-end for (i) two-stage stochastic set cover]

This setting is from Example 2 of \citet{agrawal2012price}. There are $n$ Boolean events with marginal probabilities $p_{1}, ..., p_{n}$. Each event corresponds to an item in a set $V$. There are $k$ disjoint subsets whose union is $V$. The optimization problem is to buy subsets to cover realized items. Before seeing the realization of events, in the first stage you can buy any subset $z_{1},...,z_{k}$ at price $c^{I}$ each. Once you have seen the realization, you then must pay a higher cost $c^{II}$ to buy the uncovered subset with the maximum number of realized items. The goal is to optimize $z$ to minimize expected costs. The problem can be written formally in a single stage as follows:

\begin{equation*}
    \argmin_{z} \sum_{i} c^{I}z_{i} + c^{II}\max_{i=1..k} \Bigg\{ \sum_{S \in S_{i}}P(S)|S| \cdot (1-z_{i}) \Bigg\}
\end{equation*}
s.t. $c^{I} < c^{II}$\\
$S$ is subset of items in disjoint subset $S_{i}$. $i$ refers to $i$th of $k$ disjoint subsets, whose union is $S$.
We now prove $\loss(z^{*}_{\text{end-to-end}};x) = \loss(z^{*};x)$ for this setting:\\

For any given $z^{*}$, we need to construct $p_{1}...p_{n}$ such that $z^{*}_{\text{end-to-end}}= z^{*}$. For a given $z^{*}$, we can put all probability on a single set $S$ by assigning probability 1 to all events covered by $z^{*}$ and probability 0 to all events uncovered. $z^{*}$ can never be improved by adding sets, since all events are already covered and there is only one set that can cover each event (i.e., disjoint subsets). Therefore, $z^{*}$ will always be optimal as long as $c^{I}$ (the cost saved by uncovering a set in first stage) is less than $c^{II}|S|$ (the cost of covering in second stage) for smallest $S$ that is covered. Every covered set has a least one event by our construction, therefore uncovering a set would incur a cost of at least $c^{II}$ but only save $c^{I}$. Since $c^{I} < c^{II}$, we can never reduce costs by reducing the covered sets in $z$. Therefore, $z^{*}_{\text{end-to-end}} = \argmin f(z,S) = z^{*}$.
\end{proof}

\begin{proof}[Proof of Theorem \ref{thm:e2e-opt}: Optimality of end-to-end for (i) stochastic optimization with monotone submodular cost function]

This setting is from Example 3 of \citet{agrawal2012price}. There are $n$ Boolean events with marginal probabilities $p_{1},...,p_{n}$. Let the realized set of Boolean events be $S$.
Suppose you have a monotone submodular cost function $c(S)$. You want to decide whether to (1) pay the expected cost of the subset $E_{S}[c(S)]$ or (2) pay a constant cost $C$.

\begin{align*}
    \argmin_{z} & ~c(S)z_{1} + Cz_{2}\\
    &z_{1} + z_{2} = 1, z_{1},z_{2} \in \{0,1\}
\end{align*}

We now prove $\loss(z^{*}_{\text{end-to-end}};x) = \loss(z^{*};x)$ for this setting:\\

Given marginal probabilities $p_{1},...,p_{n}$, the expected subset cost is $\sum_{S} P(S)c(S)$ where $P(S)$ is the product of probabilities for all events in that set $S$. For optimality, end-to-end needs to find $p_{1},...,p_{n}$ such that $\sum_{S} P(S)c(S) = E_{S}[c(S)]$. Let $C^{*}=E_{S}[c(S)]$. First, there must be some set of events $S$ that costs $\geq C^{*}$, since $C^{*} \leq max_{S} c(S)$. Second, among the sets that satisfy this property, by monotonicity there must be some set $S'$ and event $e$ such that $c(S'-\{e\}) \leq C^{*}$ since $C^{*} \geq min_{S} c(S)$. Setting aside event $e$, we set probabilities to 1 for events in $S'$ and 0 otherwise. For event $e$, set $p_{e} \in [0,1]$ such that $p_{e}c(S') + (1-p_{e}c(S'-\{e\}) = C^{*}$. (i.e., linear interpolation between $S'$ and  $S'-\{e\}$). There must be a solution for $p_{e}$ since $c(S'-\{e\})\leq C^{*}\leq c(S')$

\end{proof}

\begin{proof}[Proof of Theorem \ref{thm:unbounded}]

We first define the optimization problem and distribution that we will use to construct our gap. Our construction only depends on the distribution over $Y$, so without loss of generality, we let $\conditional{X} = \marginal$; an analogous construction could be made for every $x$.
We set the optimization problem to be,
\begin{align*}
    &\min  \mathbb{E}_{\marginal} [C + \sum_{i=1}^{d} (y_{i,1}  y_{i,2}   z_{i,1}) + (y_{i,3}  y_{i,4}   z_{i,2})]\\
    & \text{subject to } \sum_{i=1}^{d} z_{i,1} + z_{i,2} \geq d,  \quad 
    \forall i \quad 1\geq z_{i,1}, z_{i,2}\geq 0,
\end{align*}

where $y\in \mathds{R}^{d\times 4}$ and $z\in \mathds{R}^{d\times 2}$. Note that any basic solution to this optimization problem must set at least half of the $z$ variables to be equal to $1$.

Now define some large positive $N \in \mathds{R}$ and some small $\epsilon>0$. Let $P(Y)=P(Y_1,Y_2)P(Y_3,Y_4)$ be distributed as follows. 
\begin{align*}
  P(Y_1,Y_2) &= \begin{cases}0.5\quad y_{i, 1} = 0,  y_{i, 2} = N\\
     0.5\quad y_{i, 1} = N,  y_{i, 2} = 0
    \end{cases}\\
   P(Y_3,Y_4) &= 
    \begin{cases}1. \quad y_{i, 3} = y_{i, 4} = \frac{N}{2}-\epsilon
    \end{cases}
\end{align*}

The intuition behind this distribution is that  $\forall i$, $y_{i,1}  y_{i,2} = 0$ and $y_{i,3}  y_{i,4} > 0$ despite the fact that $y_{i,1}$ and $y_{i,2}$ have higher expected values than $y_{i,3}$ and $y_{i,4}$.

Notice that this means half of our $z$ variables have coefficients of $0$ and we can satisfy our constraints without raising the objective. Therefore, the optimal end-to-end solution is to set $z$ to be $0$ anytime the product of targets is nonzero in expectation; i.e.,
$z^{*}_{\text{end-to-end}}$ sets $ z_{i,1}=1, z_{i,2}=0$, for all $ i$.

The loss obtained by choosing $z^{*}_{\text{end-to-end}}$ is,
\begin{align*}
\loss(z^{*}_{\text{end-to-end}};x) &= \mathbb{E}_{\marginal} [C + \sum_{i=1}^{d} (y_{i,1}  y_{i,2}   1.0) +(y_{i,3}  y_{i,4}   0)]=C.
\end{align*}
The last equality follows from the fact that $\forall i, y_{i,1}  y_{i,2} = 0$.

However, as mentioned above $y_{i,1}$ and $y_{i,2}$ have higher expected values than $y_{i,3}$ and $y_{i,4}$. Therefore, the optimal two-stage solution is to set $z$ to be $1$ for each pair with the lower individual expected values i.e. $z^{*}_{\text{two-stage}}$ sets $z_{i,1}=0, z_{i,2}=1$ for all $i$.

The loss obtained by choosing $z^{*}_{\text{two-stage}}$,
\begin{align*}
\loss(z^{*}_{\text{two-stage}};x) &= \mathbb{E}_{\marginal} [C + \sum_{i=1}^{d} (y_{i,1}  y_{i,2}   0.0) + (y_{i,3}  y_{i,4}   1.0)]\\
&= C + \sum_{i=1}^{d} (\frac{N}{2}-\epsilon)^2
  1.0 \approx \frac{d  N^{2}}{8}.
\end{align*}

Therefore, the multiplicative gap $\left(\frac{\loss(z^{*}_{\text{two-stage}}; x)}{\loss(z^{*}_{\text{end-to-end}}; x)}\right)$ is $d  N^{2}$ up to constant factors.

\end{proof}

\begin{proof}[Proof of Lemma \ref{thm:linear-e2e}]
We set end-to-end to output $y'|x$ such that $y'_{i,1} = \mathbb{E}_{y\sim \conditional{x}}[\gamma(y_{i,1},y_{i,2})]$ for $i=1,\ldots,d$ and $y'_{i,2}=1$ for $i=1,\ldots,d$.

$\loss(z^{*}_{\text{end-to-end}};x) = \min_{z} \sum_{i=1}^{d}  \gamma(y'_{i,1}, y'_{i,2}) \cdot f_{i}(z)$

We know that 
\begin{align*}
  \loss(z^{*};x) &= \min_{z} \mathbb{E}_{y\sim \conditional{x}}\bigg[ \sum_{i=1}^{d} \gamma (y_{i,1},y_{i,2}) \cdot f_{i}(z)\bigg]\\
    &=\min_{z} \sum_{i=1}^{d}  \mathbb{E}_{y\sim \conditional{x}}[ \gamma (y_{i,1},y_{i,2}) \cdot f_{i}(z)]\\
    &=\min_{z} \sum_{i=1}^{d}  f_{i}(z) \cdot \mathbb{E}_{y\sim \conditional{x}}[ \gamma (y_{i,1},y_{i,2})]\\
\end{align*}

By the definition of $\gamma$
\begin{align*}
    \loss(z^{*};x) &= \min \sum_{i=1}^{d/2}  f_{i}(z) \cdot \gamma (\mathbb{E}_{y\sim \conditional{x}}[y_{i,1}, y_{i,2}], 1)\\
    &= \min \sum_{i=1}^{d/2}  f_{i}(z) \cdot \gamma(y'_{i,1}, y'_{i,2})\\
    &= \loss(z^{*}_{\text{end-to-end}};x)
\end{align*}

\end{proof}

\begin{proof}[Proof of Lemma \ref{lemma:lin}]

A function $\gamma(y,y')$ is linear if and only if $\forall \conditional{x}, \mathbb{E}_{\mathcal{D}}[\gamma(y,y')] =  \gamma(\mathbb{E}_{\mathcal{D}}[y],\mathbb{E}_{\mathcal{D}}[y'])$. 

The reverse direction follows by linearity of expectation so we must show that if a function $\gamma(y)$ is nonlinear then $$\exists \mathcal{D}, \mathbb{E}_{\mathcal{D}}[\gamma(y)] \neq  \gamma(\mathbb{E}_{\mathcal{D}}[y]).$$ A function $\gamma$ is linear if and only if $\gamma(\alpha y_1 + (1-\alpha) y_2) = \alpha \gamma(y_1) + (1-\alpha)\gamma(y_2)$ $\forall y_1, y_2, \alpha$. If a function is nonlinear then by definition there must exist two points $y_1,y_2$ between which the function is nonlinear. This means there exists a point $y'= \alpha y_1 + (1-\alpha)y_2$ such that $\gamma(y')\neq \gamma(\alpha y_1 + (1-\alpha)y_2)$ therefore if we choose our distribution such that $Pr[y=y_1] = \alpha$ and $Pr[y=y_2] = 1- \alpha$, it is easy to see that $\mathbb{E}_{\mathcal{D}}[\gamma(y)] \neq  \gamma(\mathbb{E}_{\mathcal{D}}[y])$.
\end{proof}

\begin{proof}[Proof of Theorem \ref{thm:two_stage_gap}]
We consider the case of $d=2$; the extension to arbitrarily many dimensions is trivial.
We first define our function $f$,
\begin{equation*}
    f(y,z)= \gamma (y_{1,1},y_{2,1})   f_{1}(z) + \gamma (y_{2,1},y_{2,2})   f_{2}(z).
\end{equation*}
We construct $\marginal$ and small $\epsilon>0$ such that $\mathbb{E}_{\marginal}[\gamma(y_{1,1},y_{1,2})] \neq \gamma(\mathbb{E}_{\marginal}[y_{1,1}],\mathbb{E}_{\marginal}[y_{1,2}])$. Such a distribution is guaranteed to exist by Lemma \ref{lemma:lin}.

Now there are two cases,
\begin{enumerate}
    \item  $\mathbb{E}_{\marginal}[\gamma(y_{1,1},y_{1,2})] < \gamma(\mathbb{E}_{\marginal}[y_{1,1}],\mathbb{E}_{\marginal}[y_{1,2}])$ where we choose point masses for our remaining two values such that $p(y_{2,1} = \mathbb{E}_{\marginal}[y_{1,1}] - \epsilon) = 1.0$ and $ p(y_{2,2} = \mathbb{E}_{\marginal}[y_{1,2}] - \epsilon) = 1.0$;
    \item  $\mathbb{E}_{\marginal}[\gamma(y_{1,1},y_{1,2})] > \gamma(\mathbb{E}_{\marginal}[y_{1,1}],\mathbb{E}_{\marginal}[y_{1,2}])$, where we choose point masses for our remaining two values such that $p(y_{2,1} = \mathbb{E}_{\marginal}[y_{1,1}] + \epsilon) = 1.0,$ and $ p(y_{2,2} = \mathbb{E}_{\marginal}[y_{1,2}] + \epsilon) = 1.0$.
\end{enumerate}

Without loss of generality, we assume 
that we are in the first case and  our constructed optimization problem will minimize $f$. The example works symmetrically in the second case where we can construct the optimization problem to maximize $f$.

We show that two-stage is suboptimal for the following optimization problem
\begin{align*}
    \min_{z} ~~&\mathbb{E}_{y \sim \marginal}[\gamma (y_{1,1},y_{1,2})   z_{1} + \gamma (y_{2,1},y_{2,2})   z_{2}]\quad \text{subject to } z_{1}+z_{2} \geq 1.
\end{align*}
It is easy to see that the optimal choice of $z$ is
$
z^{*}_{OPT} = \{z_{1}=1.0, z_{2}=0.0\},
$
which gives a loss of
$\loss(z^{*}_{OPT}) = \mathbb{E}_{\marginal}[\gamma(y_{1,1},y_{1,2})].$
However, since two-stage makes its choices with respect to $\gamma(\mathbb{E}_{\marginal}[y_{1,1}], \mathbb{E}_{\marginal}[y_{1,2}])$ 
and 
$\gamma(\mathbb{E}_{\marginal}[y_{2,1}], \mathbb{E}_{\marginal}[y_{2,2}]),$
it chooses the solution
$z^{*}_{\text{two-stage}} = \{z_{1}=0.0, z_{2}=1.0\},$
giving it a loss of
$\loss(z^{*}_{\text{two-stage}}) = \mathbb{E}_{\marginal}[\gamma(y_{2,1},y_{2,2})].$

If we choose a small enough $\epsilon$,
$\mathbb{E}_{\marginal}[\gamma(y_{1,1},y_{1,2})] < \mathbb{E}_{\marginal}[\gamma(y_{2,1},y_{2,2})] $, and hence $
\loss(z^{*}_{OPT}) < \loss(z^{*}_{\text{two-stage}} ).
$

This construction trivially extends to $d>2$ by making $f_{i}(z) = 0, \forall i > 2$.

\end{proof}

\begin{proof}[Proof of Theorem \ref{thm:linear}, Condition 1]

We know that $\mathcal{L}_{OPT}= \min \mathbb{E}_{y\sim \conditional{x}}[f(y,z)]$
and $\mathcal{L}_{\text{two stage}} = \min_{z} f(\mathbb{E}_{y\sim \conditional{x}}[y],z)]$

Plugging in our definition of $f(y,z)$ from Equation \ref{eqn:element_wise} we get
\begin{align*}
\mathcal{L}_{OPT}&= \min \mathbb{E}_{y\sim \conditional{x}}[ \sum_{i=1}^{d} \gamma (y_{i,1},y_{i,2}) \cdot f_{i}(z)]\\
&=\min \sum_{i=1}^{d} \mathbb{E}_{y\sim \conditional{x}} [\gamma (y_{i,1},y_{i,2}) \cdot f_{i}(z)]\\
&=\min \sum_{i=1}^{d} f_{i}(z) \cdot \mathbb{E}_{y\sim \conditional{x}} [ \gamma (y_{i,1},y_{i,2}) ].
\end{align*}
Since we know $\gamma$ is a linear function, by linearity of expectation
\begin{align*}
    \mathcal{L}_{OPT}&= \min_{z} \sum_{i=1}^{d} f_{i}(z) \cdot \gamma (\mathbb{E}_{y\sim \conditional{x}}[y_{i,1}],\mathbb{E}_{y\sim \conditional{x}}[y_{i,2}])\\
    &=\mathcal{L}_{\text{two stage}}.
\end{align*}
\end{proof}

\begin{proof}[Proof of Theorem \ref{thm:linear}, Condition 2]

We know that $\mathcal{L}_{OPT}= \min \mathbb{E}_{y\sim \conditional{x}}[f(y,z)]$
and $\mathcal{L}_{\text{two stage}} = \min_{z} f(\mathbb{E}_{y\sim \conditional{x}}[y],z)]$

Plugging in our definition of $f(y,z)$ from Equation \ref{eqn:element_wise} we get
\begin{align*}
\mathcal{L}_{OPT}&= \min_{z} \mathbb{E}_{y\sim \conditional{x}}[ \sum_{i=1}^{d} \gamma (y_{i,1},y_{i,2}) \cdot f_{i}(z)]\\
&=\min_{z} \sum_{i=1}^{d} \mathbb{E}_{y\sim \conditional{x}} [\gamma (y_{i,1},y_{i,2}) \cdot f_{i}(z)]\\
&=\min_{z} \sum_{i=1}^{d} f_{i}(z) \cdot \mathbb{E}_{y\sim \conditional{x}} [ \gamma (y_{i,1},y_{i,2}) ].
\end{align*}
Since $\gamma(y,y') = y\cdot y'$:
\begin{align*}
    \mathcal{L}_{OPT}&= \min_{z} \sum_{i=1}^{d} f_{i}(z) \cdot \mathbb{E}_{y\sim \conditional{x}} [ y_{i,1} \cdot y_{i,2}]\\
    &= \min_{z} \sum_{i=1}^{d} f_{i}(z) \cdot \int_{y\sim \conditional{x}} y_{i}P(y_{i,1}) \cdot y_{i,2}P(y_{i,2}|y_{i,1})).
\end{align*}
Since $P(y_{i,2}) = P(y_{i,2} | y_{i,1})$, by independence:
\begin{align*}
&\min_{z} \sum_{i=1}^{d} f_{i}(z) \cdot \int_{y\sim \conditional{x}} y_{i,1}P(y_{i,1}) \cdot \int_{y\sim \conditional{x}} y_{i,2}P(y_{i,2})\\
    &\min_{z} \sum_{i=1}^{d} f_{i}(z) \cdot \gamma (\mathbb{E}_{y\sim \conditional{x}}[y_{i,1}],\mathbb{E}_{y\sim \conditional{x}}[y_{i,2}])\\
    &=\mathcal{L}_{\text{two stage}}.
\end{align*}
\end{proof}

\begin{proof}[Generalization of Theorem \ref{thm:linear} for objective functions that are linear in $y$]

We know that $\mathcal{L}_{OPT}= \min \mathbb{E}_{y\sim \conditional{x}}[f(y,z)]$
and $\mathcal{L}_{\text{two stage}} = \min_{z} f(\mathbb{E}_{y\sim \conditional{x}}[y],z)]$. Since $f(y,z)$ is linear in $y$, by linearity of expectation we can move expectation inside $f$: 

\begin{align*}
\mathcal{L}_{OPT}&= \min_{z}  f(\mathbb{E}_{y\sim \conditional{x}} [y,z])\\
&= \min_{z}  f(\mathbb{E}_{y\sim \conditional{x}} [y],z) & \text{pulling out $z$ since expectation over $y$}\\
&=\mathcal{L}_{\text{two stage}}.
\end{align*}

\end{proof}

\end{document}